\newcommand{\stkout}[1]{\ifmmode\text{\sout{\ensuremath{#1}}}\else\sout{#1}\fi}
\title{Koopman-informed recurrent neural networks}
\author{\name Erik~Lien~Bolager~\(^{1,2}\), \name Ana~Čukarska~\(^{1,2}\), \name Iryna~Burak~\(^{1,2}\)\\ \name Zahra~Monfared~\(^{3,4}\), \name Felix~Dietrich~\(^{1,2}\)\thanks{Corresponding author, \texttt{felix.dietrich@tum.de}}\\
        \addr \(^1\)Munich Data Science Institute\\ \(^2\)School of Computation, Information and Technology, Technical University of Munich, Germany\\
        \(^3\)Interdisciplinary Center for Scientific Computing (IWR)\\
        \(^4\)Department of Mathematics and Computer Science, Heidelberg University, Germany
        }
\begin{document}

\maketitle

\begin{abstract}
Recurrent neural networks are a successful neural architecture for many time-dependent problems, including time series analysis, forecasting, and modeling of dynamical systems.
In the context of dynamical systems, training with backpropagation through time can lead to challenges arising from exploding or vanishing gradients.
In this contribution, we introduce Koopman-informed recurrent neural networks, a computational approach to construct all weights and biases of a recurrent neural network without using gradient-based methods.
The approach is based on a combination of random feature networks and Koopman operator theory for dynamical systems.
The hidden parameters of a single recurrent block are sampled at random, while the outer weights are constructed using extended dynamic mode decomposition.
This approach alleviates some problems with backpropagation commonly related to recurrent networks.
The connection to Koopman operator theory also allows us to start using results in this area to analyze recurrent neural networks.
In computational experiments on time series, forecasting for chaotic dynamical systems, control problems, and on real-world data,
we observe that with comparable forecasting accuracy, the training time of the Koopman-informed recurrent neural networks is significantly improved when compared to models trained with commonly used gradient-based methods.
\end{abstract}

\section{Introduction}
Dynamical systems are defined through an evolution operator which describes how states evolve over time. In many cases one does not have access to the evolution operator but only to a certain number of observations from these systems. To identify the underlying dynamics, data-driven approaches have been proposed. One approach is to use a recurrent neural network (RNN) \citep{funahashi-1993}, which is a neural network trained to model sequential data. RNNs have been successfully applied in dynamical system modeling \citep{kimura-1998,gajamannage-2023}, and, in particular, piecewise linear RNNs have been studied in the context of dynamical systems \citep{durstewitz-2017,koppe2019identifying, brenner22, hess_generalized_2023}. However, RNNs are also notoriously difficult to train. This is because their loss gradients backpropagated in time tend to saturate or diverge during training, which is commonly referred to as the exploding and vanishing gradient problem (EVGP) \citep{pascanu2013difficulty,schmidt2019identifying}. Bifurcations may also contribute to sudden jumps in the loss observed during RNN training, potentially hindering the training process severely \citep{doya1992bifurcations,eisenmann2023bifurcations}. In \citet{eisenmann2023bifurcations}, the authors demonstrated that specific bifurcations in ReLU-based RNNs are always associated with EVGP during training. In addition, the existence of long-term memory adversely affects the learning process of RNNs \citep{bengio1994learning,hochreiter2001gradient,li2021curse}, known as the ``{curse of memory}''.
Established remedies \citep{hochreiter1997long,schmidt2019identifying} can be used to prevent gradients from vanishing, but when modeling dynamical systems, these remedies are often insufficient. For instance, in systems with chaotic dynamics the gradients invariably explode, posing a challenge that cannot be mitigated just through architectural adjustments of the RNN, regularization, or constraints; instead, it is necessessary to address the problem during the training process \citep{mikhaeil2022difficulty}. 

In light of these issues, we offer a different approach, which we call \textit{Koopman-informed RNNs}. This approach circumvents backpropagation entirely and combines the following three ideas: RNN architecture, random feature methods, and the Koopman operator. Instead of training the RNN using variants of backpropagation, we rather sample the parameters of the hidden layer(s) at random. Then, to incorporate the temporal aspect of RNNs, we use the fact that the output of the hidden layer(s) is typically in a high-dimensional space, and thus can be used to approximate the Koopman operator of the underlying dynamical system being learned. At the end, we linearly project back to the original state space. The central structure, see \Cref{fig:Koopman_RNN_scheme}, can be summarized as lifting the current state to a high-dimensional space using randomly sampled hidden layer(s), evolve one timestep forward by applying the matrix that approximates the Koopman operator, and then map the state back to the original space. If the observation and hidden spaces differ from each other, we also separately approximate a linear map from the high-dimensional hidden space to the observation space.

\begin{figure}
    \centering
    \includegraphics[width=0.7\linewidth]{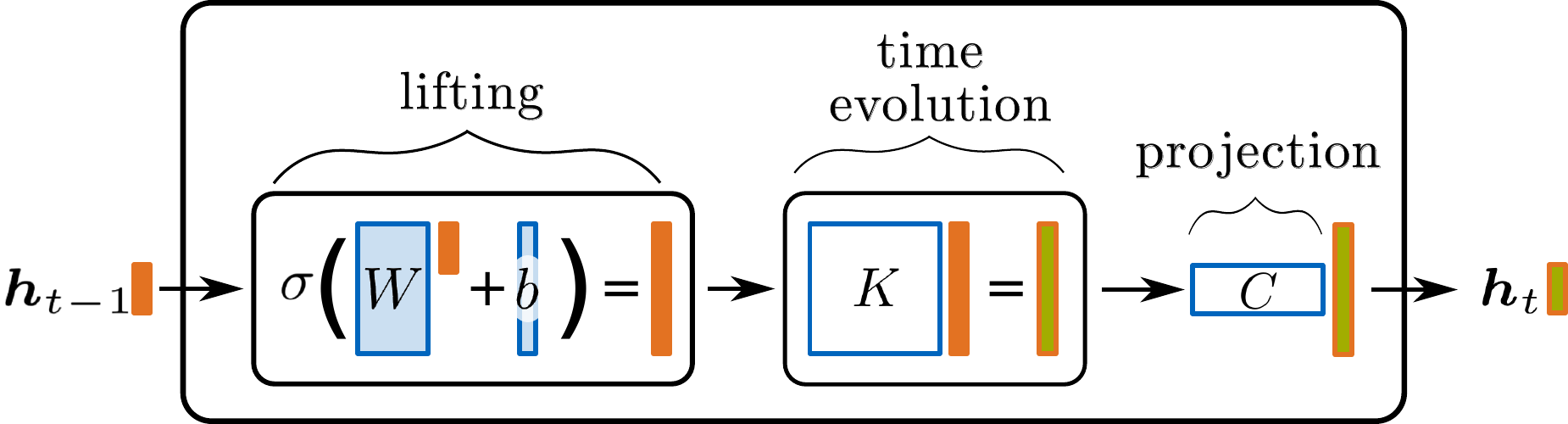}
    \caption{The central structure of an Koopman-informed RNN, where \(W\) and \(b\) are randomly sampled, \(K\) is the approximate Koopman operator, and \(\vh_t\) is the state at time \(t\).}
    \label{fig:Koopman_RNN_scheme}
\end{figure}

This choice of architecture and training process alleviates the aforementioned problems such as EVGP, and significantly speeds up the training process. It also introduces structure to the different components in an RNN, and opens up the possibility to study RNNs using tools from the Koopman literature. The usefulness of the Koopman operator stems from the fact that the evolution of the dynamical systems turns into a linear one, which implies the operator spectrum can be used to model and study the learned system \citep{mezic-2005,mezic-2013,korda2018convergence}. In addition, the reframing of the dynamics in higher-dimensional spaces means we can apply tools from linear control theory to non-linear RNNs, such as linear-quadratic regulators (LQR). 

Our idea of randomly sampling the parameters is founded on prior work by \citet{rahimi-2008} and \citet{barron-1993}, where feedforward neural networks are treated as random feature models.
In this contribution, we extend the random feature sampling algorithm by \citet{bolager-2023} to recurrent architectures. This bears a resemblance to reservoir computing \citep{jaeger-2004, lukosevicius-2009}, which has been successfully used to model various dynamical systems, including chaotic ones \citep{pathak-2018, gauthier-2021}. Typically, reservoir computers  are constructed by drawing parameters from a fixed distribution, which is not informed by a dataset. This is one of the properties that distinguish our approach from classical reservoir computing, because our sampling procedure incorporates the training data into the definition of the parameter distribution that is being sampled from \citep{bolager-2023}. Our approach could also be seen as a novel type of reservoir architecture with data-dependent parameter distributions and additional linear structure by approximating the Koopman operator.

The ideas we present here may also be relevant for the Koopman operator community, because the approach can be rephrased in terms of a numerical algorithm for Koopman operator approximation. Many numerical approximation algorithms of the Koopman operator exist~\citep{schmid-2010,williams-2015,li-2017a,mezic-2020,schmid-2022}, and in particular, the functional space used for the approximation of the Koopman operator has been constructed with neural networks using gradient descent~\citep{li-2017a} and using random features~\citep{salam-2022}.
Reservoir computing has also been related to Koopman operator approximation by~\citet{bollt-2021a,gulina-2021}. 
To our knowledge, the relation of the Koopman operator to the weight matrices of RNNs has not been discussed before, and is a connection we investigate further in this paper.

Our paper is organized as follows: in \Cref{sec:math_framework} we introduce the mathematical framework of Koopman-informed RNNs and discuss the application of linear control theory in our setting. In \Cref{sec:computational experiments} we discuss numerical experiments to compare our approach with related models, including RNNs trained with stochastic gradient descent and reservoir computing. The experiments include approximation of stable, chaotic, and controlled systems, from both synthetic and real data. The limitations of our work and prospects for future work are discussed in \Cref{sec:conclusion}.

\section{Mathematical framework}\label{sec:math_framework}
We start by outlining the problem setting we are interested in, and define a framework of recurrent neural networks (RNNs) which we will use throughout. We then introduce parameter sampling and its use in the numerical approximation of the Koopman operator. 

Let \(\mathcal{X} \subseteq \mathbb{R}^{d_x}\) be an input space, \( \mathcal{Y}\subseteq \mathbb{R}^{d_y}\) an output space, and \(\mathcal{H} \subseteq \mathbb{R}^{d_h}\) a state space. We assume that these spaces are associated with the measures \(\mu_x\), \(\mu_y\), and \(\mu_h\), respectively. The underlying dynamical system is then defined through the evolution operator \(F\), where we may be working in an uncontrolled setting \(\vh_{t} = F(\vh_{t-1})\) or a controlled setting \(\vh_{t} = F(\vh_{t-1}, \vx_t)\), with \(\vh_{t} \in \mathcal{H} \) and \(\vx_{t} \in \mathcal{X} \). In this paper we aim to learn dynamical systems through snapshots of the system, and denote these by \(H = [\vh_1 , \vh_2 , \dots ,  \vh_N] \in \mathbb{R}^{d_h\times N}\) and \(H' = [\vh'_1 , \vh'_2, \dots , \vh'_N] \in \mathbb{R}^{d_h\times N}\). In addition, we denote the input dataset \(X = [\vx_1 , \vx_2, \dots , \vx_N] \in \mathbb{R}^{d_x\times N}\) and the dataset of observations \(Y = [\vy_1 , \vy_2 , \dots , \vy_N] \in \mathbb{R}^{d_y\times N}\). In an uncontrolled system we only have access to \(H\) and \( H'\), and \(Y\). In controlled systems we also assume access to the control states \(X\). Before we go deeper into the problem setup and the datasets, and how it differs from the classical RNN setting, we will first formally define the RNN.

We denote activation functions as \(\sigma \colon \mathbb{R} \rightarrow \mathbb{R}\) \footnote{This is extended to map \( \mathbb{R}^a\rightarrow \mathbb{R}^a \) for \(a \in \mathbb{N}\), by applying it element-wise.}, where we are mainly working with \(\tanh\) in this paper, as it is an analytic function and suitable for SWIM \citep{bolager-2023}, which is the algorithm we use to find the learnable parameters. Other functions such as ReLU are also a valid choice. The following definition outlines the models we consider.
\begin{definition}\label{def:RNN}
    Let \(W_h \in \mathbb{R}^{M \times d_h}\), \(W_x \in \mathbb{R}^{\Hat M\times d_x}\), \(\vb_h\in \mathbb{R}^{M}, \vb_x \in \mathbb{R}^{\Hat M}\), \(C_h \in \mathbb{R}^{d_h \times M}\), \(C_x\in \mathbb{R}^{d_h\times \Hat M}\), \(Z\in \mathbb{R}^{d_x\times d_h}\), \(\vz \in \mathbb{R}^{d_x}\), and \(V\in \mathbb{R}^{d_y \times d_h}\). For time step \(t \in \mathbb{N}_{\geq 1}\)
   and \(\vh_0 \in \mathcal{H}\), we define a recurrent neural network (RNN) by
   \begin{align}\label{eq:recurrent_networks_nonaffine}
        \vh_t &=\sigma_{hx}(C_h \, \mathcal{F}_M(\vh_{t-1}) + C_x \, \mathcal{G}_{\Hat M} (\vx_t) + \vb_{hx})\\
        \vy_t &= V  \vh_t\\
        \vx_{t+1} &= Z \vh_t + \vz\label{eq:one-to-many},
    \end{align}
    where \(\mathcal{F}_M(\vh_{t-1}) = \sigma(W_h\, \vh_{t-1} + \vb_h)\) and \(\mathcal{G}_{\Hat M}(\vx_t) = \sigma(W_x\, \vx_t + \vb_x)\) are the hidden layers. The additional step of \Cref{eq:one-to-many} is only added for RNNs with a one-to-many architecture.
\end{definition}

\begin{remark}
For coherence with the common RNN setup, we have added \(\sigma_{hx}\) as an arbitrary activation function. We choose to set \(\sigma_{hx}\) as the identity function as the state space is often \(\mathbb{R}^{d_h}\) and to let us solve for the last linear layer using least square solvers. Other activation functions, such as the logit, are possible as well, where one would optimize using techniques from generalized linear models. 
\end{remark}

In the experiments of this paper we assume to either have snapshots from the full state of the dynamical system, or at least partial observations of the full state which can be extended with time-delay embedding. This allows us to explicitly construct snapshots of the states \(\vh_t\), that we then use when we construct our RNNs. This is markedly different from how the states are treated as when optimizing through backpropagation. While both try to capture the underlying dynamics, they are not treated strictly as hidden states in the classical sense in our approach. 
That is, the dataset setup we use in the paper aligns with the usual perspective for dynamical systems modeling, similar to the problem setting in \citep{hess_generalized_2023}. Such a perspective is not directly applicable to an arbitrary RNN modeling problem, e.g., translation of natural language. This difference arises because we do not employ gradient-based optimization to construct hidden representations iteratively during training, but instead explicitly create the datasets \(H\) and \(H'\) that capture the underlying dynamics such that we can learn a good representation of the state \(\vh_t\) in the RNN defined in \Cref{eq:recurrent_networks_nonaffine}. However, the difference is not as stark. To make the connection to datasets provided in a more standard RNN setup, we need to consider how \(H\) and \(H'\) are constructed in that setting. The input used in the burn-in phase (or encoder part) of classical RNNs is the data used in \(H\), while \(H'\) uses part of \(H\) and \(Y\) to create corresponding future states that capture the underlying dynamics. We can consider two different settings that we also use in our experiments. (1) If the initial input is the full state of the system, and \(Y\) has the corresponding full state one time step ahead, then we set \(H' = Y\), where then \(\vh_n' = F(\vh_n, \vx_n)\) and \(\vh_n \in H, \vx_n \in X\). (2) If the initial input data \(\vh_n = [\vh_n^{(1)}, \dots, \vh_n^{(T')}]\) is a partial observation with time delay \(T'\), and \(\vy_n \in Y\) is the next partial observed state, then the corresponding \(\vh_n' \in H'\) equals \([\vy_n, \vh_n^{(1)}, \vh_n^{(2)}, \dots, \vh_n^{(T'-1)}, ]\). After training we simply set \(\vh_0\) equal to the data passed as input during the burn-in phase, and then compute the output and the next state in the RNN using the previous state and input \(\vx_t\) (if applicable), exactly the same way as in a regular RNN setting. In the computational experiments we consider in this paper the construction of the hidden states is quite straightforward, but there may be applications where this construction must be more elaborate. A major benefit of the more explicit construction of the hidden states is that they are immediately more interpretable than the ones created through iterative training.

Regarding the training procedure of RNNs: the classical way to train this type of RNN is through iterative backpropagation known as backpropagation-through-time, which suffers the aforementioned issues such as EVGP and high computational complexity. To circumvent backpropagation, we instead start by sampling the hidden layer parameters, as explained next.

 \subsection{Sampling RNN}\label{sec:sampling}
\textit{Sampled neural networks} are neural networks where the parameters of the hidden layers are sampled from some distribution, and the last linear layer is either sampled or, more typically, constructed by solving a linear problem. Following \citet{bolager-2023}, we sample the weights and biases of the hidden layers of the two networks \(F_{\mathcal{H}} = C_h\mathcal{F}_M\) and \(F_{\mathcal{X}} = C_x\mathcal{G}_{\Hat M}\), by sampling pairs of points from the domain \(\mathcal{H}\) and \(\mathcal{X}\), and then construct the weights and biases from these pairs of points. For the final layer, we use (generalized) linear regression techniques to find the parameters based on the observations. Concretely, let \(\mathbb{P}_{\mathcal{H}}\) and \(\mathbb{P}_{\mathcal{X}}\) be probability distributions over \(\mathcal{H}^2\) and \(\mathcal{X}^2\) respectively. For each neuron in the hidden layer of \(F_{\mathcal{H}}\), sample \((\vh^{(1)}, \vh^{(2)}) \sim \mathbb{P}_{\mathcal{H}}\) and set the weight \(w\) and the bias \(b\) of said neuron to
\begin{align}\label{eq:sampled_weights}
    \vw = s_1 \frac{\vh^{(2)} - \vh^{(1)}}{\lVert \vh^{(2)} - \vh^{(1)}\rVert^2}, \quad b = - \langle \vw, \vh^{(1)} \rangle + s_2,
\end{align}
where \(\lVert \cdot \rVert\) and \(\langle \cdot, \cdot \rangle\) are the Euclidean norm and inner product, and $s_1, s_2 \in \mathbb{R}$ are constants depending on the choice of activation function. In this paper, we only train networks with one hidden layer. Regarding multilayer sampling we direct the reader to \citet{bolager-2023}, where the authors include the full sample and construction procedure for an arbitrary number of hidden layers. 

By using the parameter sampling technique above, we can adapt the weights and biases to the underlying domain and construct weights with direction along the data. Empirically, we will demonstrate that this leads to improvements over using data-agnostic distributions such as the standard Gaussian. One can choose arbitrary probability distributions as \(\mathbb{P}_{\mathcal{H}}\) and \(\mathbb{P}_{\mathcal{X}}\), with uniform distribution being a common choice. For the supervised setting, \citet{bolager-2023} proposed a sampling distribution that is constructed to have a high density at the steepest gradients of the target function. For this paper, we sample with densities \(p_{\mathcal{H}}\) and \(p_{\mathcal{X}}\) proportional to 
\begin{align}\label{eq:swim_density}
    p_{\mathcal{H}}\propto \frac{\lVert F(\vh^{(2)}) - F(\vh^{(1)})\rVert}{\lVert \vh^{(2)} - \vh^{(1)}\rVert}, \quad p_{\mathcal{X}}\propto 1, 
\end{align}
respectively. In practice, we do not have access to the full state space \(\mathcal{H}
\), and we therefore discretize using datasets \(H\) and \(H'\), with \Cref{eq:swim_density} defining our probability mass functions. 

Once the weights and biases are constructed, we only need to solve the following optimization problem, 
\begin{align}\label{eq:least_squares}
    [C_h, C_x] &= \argmin_{\hat C_h, \hat C_x} \sum_{n=1}^{N} \lVert (\hat C_h \mathcal{F}_M(\vh_{n}) + \Hat C_x \, \mathcal{G}_{\Hat M}(x_n) - \vh'_n\rVert^2 \\ 
    &=\argmin_{\hat C_h, \hat C_x} \sum_{n=1}^{N} \lVert (\hat C_h\,\sigma(W_h\, \vh_{n} + \vb_h) + \Hat C_x \, \sigma(W_x\, \vx_n + \vb_x) + \vb_{hx}) - \vh'_n\rVert^2. \nonumber
\end{align}
To summarize, we define a \textit{sampled RNN} as a model that is constructed by sampling weights of the hidden layer of the RNN and subsequently solving the regression problem in \Cref{eq:least_squares}.

\subsection{Involving the Koopman operator}
We now introduce the Koopman operator and incorporate it into our sampled RNNs. We do this to both add more structure and interpretability to the matrices \(C_h\) and \(C_x\), allow for linear control theory later on, and apply analysis from the Koopman literature to better understand RNNs. 

Let us first give a brief introduction of the Koopman operator. Given a suitable function(al) space \(\mathcal{F}\), the Koopman operator \(\mathcal{K}\colon \mathcal{F} \rightarrow \mathcal{F}\) is defined as 
\begin{align*}
    [\mathcal{K} \phi](\vh) = (\phi \circ F) (\vh), \quad  \phi \in \mathcal{F}, \vh\in\mathcal{H}.
\end{align*}
This operator captures the evolution of the dynamical system in the function space \(\mathcal{F}\) instead of the state space. In most cases, this adds complexity because \(\mathcal{F}\) is infinite-dimensional, but has the benefit that the action of the operator is always linear---even for strongly nonlinear evolutions \(F\). This allows the study of nonlinear dynamical systems by looking at the spectrum of the associated linear operator. More specifically, assume there exist eigenfunctions \(\varphi_k\) of \(\mathcal{K}\) with corresponding eigenvalues \(\lambda_k\). Then, for an arbitrary function \(\phi_i  \in \text{Span} \{\varphi_{k} \}\) we have that
\begin{align*}
    \phi(\vh_{t+1}) = \mathcal{K} \, \phi (\vh_{t}) \, = \, \mathcal{K}  \sum_{k} c_k \, \varphi_{k}(\vh_{t}) \, = \,  \sum_{k}  c_k \, \mathcal{K}\, \varphi_{k}(\vh_t) \, = \,  \sum_{k}  c_k \,  \lambda_k \, \varphi_{k}(\vh_t),
\end{align*}
where \(c_k\) are the so-called \textit{Koopman modes} associated with \(\phi\). Both the Koopman modes and the eigenvalues can be useful for analyzing the dynamical system, which we demonstrate further in \Cref{sec: interpretability}. For a more extensive introduction to the Koopman operator and surrounding theory, see \Cref{sec:koopman_intro}.

The estimation procedure we use  to approximate the Koopman operator is called \textit{extended dynamic mode decomposition (EDMD)}, which is an algorithm to construct a finite-dimensional approximation of the operator. We give a very brief description of EDMD here and give a more thorough introduction in \Cref{sec:edmd_intro} and \Cref{sec:controlled_koop_intro}.
In the uncontrolled setting, the EDMD method requires a predetermined dictionary \(\Psi_M= \{\psi_1, \dots,\psi_M | \psi_i \colon \mathcal{H} \rightarrow \mathbb R \}\subset  \mathcal{F}\). In this paper, we choose the randomly sampled neurons as the dictionary functions and call this particular dictionary \(\mathcal{F}_M\). Then we approximate the Koopman operator \(\mathcal{K}\) in this subspace \(\mathcal{F}_M\) using the data \(H, H'\), by minimizing
\begin{align*}
    K = \argmin_{\Tilde K \in \mathbb R^{M\times M}} \sum_{n=1}^N  \lVert \mathcal{F}_M(\vh'_{n}) - \Tilde{K} \mathcal{F}_M(\vh_{n})\rVert, \quad \vh'_{n} \in H', \vh_{n} \in H.
\end{align*}
Defining \(\mathcal{F}_M(H) = [\mathcal{F}_M(\vh_1), \dots, \mathcal{F}_M(\vh_{N})] \in \mathbb{R}^{M \times N}\) and \(\mathcal{F}_M(H') = [\mathcal{F}_M(\vh'_1), \dots , \mathcal{F}_M(\vh'_{N})] \in \mathbb{R}^{M \times N}\), the solution to the minimization problem can then be written as
\begin{align}\label{eq:koopman_approx_main}
    K = \mathcal{F}_M(H') \; \mathcal{F}_M(H)^+,
\end{align}
where \(^+\) is the matrix pseudoinverse. Similarly, in the controlled setting, the approximation involves two separate matrices \(K\) and \(B\), 
\begin{align*}
    [K, B]  = \mathcal{F}_M(H')\, (\mathcal{F}_M(H) \oplus \mathcal{G}_{\Hat M}(X))^+,
\end{align*}
where \(\mathcal{G}_{\Hat M}\) is the second dictionary mapping from \(\mathbb R^{d_x}\) to \(\mathbb{R}^{\Hat M}\), and \((\mathcal{F}_M(H) \oplus \mathcal{G}_{\Hat M}(X)) \in \mathbb{R}^{(M+\Tilde M)\times N}\) is the concatenation of the matrices. Regardless of whether uncontrolled or controlled, we compute a mapping \(C\), which projects from the high dimensional dictionary space back to the state space, by minimizing \({\lVert H - C\mathcal{F}_M(H)\rVert}\), with the solution
\begin{align*}
    C =  H \mathcal{F}_M(H)^+.
\end{align*}

To compute trajectories using the approximations \(K \in \mathbb R^{M\times M}\), \(B \in \mathbb R^{M\times \Hat M}\), and \(C \in \mathbb{R}^{d_h \times M}\), we have
\begin{align}\label{eq:rec_koop}
    \vh_t =  C ( K\sigma(W_h \, \vh_{t-1} + \vb_h) + B \sigma(W_x \, \vx_t + \vb_x)).
\end{align}
It is important to notice that the resulting function in \Cref{eq:rec_koop} consists of two neural networks applied to the previous state and input. The hidden layers are sampled, and the outer matrices are constructed using linear solvers. After computing the different matrices \(C\), \(K\), and \(B\), we can always collapse the matrix products into \(C_h = CK\) and \(C_x = CB\). Hence, \Cref{eq:rec_koop} follows \Cref{def:RNN}, and in particular, is a \textit{sampled recurrent neural network} per our definition. In the rest of the paper, we will turn our focus solely on the architecture specified in \Cref{eq:rec_koop}, which we refer to as \textit{Koopman-informed RNN} (KIRNN).

To motivate further the choice of including the Koopman operator by splitting the matrices \(C_h\) and \(C_x\) as described, we give the following reason: the hidden layers \(\mathcal{F}_M\) and \(\mathcal{G}_{\Hat M}\) map their respective input to a higher dimensional space. In a higher dimensional space, the possibly nonlinear evolution described by \(F\) becomes more and more linear \citep{korda2018convergence}. This evolution is then captured by \(K\) and \(B\) before we map down to the state space through \(C\). The matrix \(K\) is then an approximation of the Koopman operator of the system we want to learn. Choosing \(M\) large enough means we can capture the evolution through a linear map before we map back to the state space. By extending the Koopman theory to the controlled setting, one also finds a similar interpretation of \(B\) \citep{korda-2018b}.

\subsection{Nonlinear optimal control with RNNs}
The goal of optimal control is to find a sequence of controls \(\{\vx_t\}\) that minimize a cost function \(J(\{\vh_t\}, \{\vx_t\})\) and steer the system towards a chosen state \(\vh^*\), where \(\vh_t = F(\vh_{t-1},\vx_t)\). For the purpose of this paper, we assume \(J\) is quadratic. When the underlying dynamics described by \(F\) is nonlinear, tools such as nonlinear model predictive control (MPC) are necessary \citep{mayne-2000, grune-2017}. However, because we established the connection of our network architecture and the Kooopman operator, we can use tools from linear control theory to control the nonlinear system \(F\) using KIRNN. For our purposes we opt to use linear-quadratic regulator (LQR) when computing \(\vx_t\), and we now show how to incorporate this controller into KIRNN. Given the cost matrices \(Q, R, S\), the quadratic cost function we work with is 
\begin{align*}
    J(\vh_t, \vx_t) = \sum_{t=0}^\infty (\vh\tran_t Q \vh_t + \vx_{t+1}\tran R \vx_{t+1} + 2\vh_t\tran S \vx_{t+1}).
\end{align*}
The optimal control sequence \(\{\vx_t\}_{t=1}^\infty\) can be found by first setting
\begin{align*}
    \Hat Z &= (R + B\tran P B)^+ (B\tran P K + S\tran),
\end{align*}
where \(P\) is the solution to the discrete-time algebraic Riccati equation
\begin{align*}
    P &= K\tran P K - \left[(K P B+S)(R + B\tran P B)^+(B\tran P K + S\tran)\right]+Q.\\
\end{align*}
The matrices \(K\) and \(B\) are determined by using EDMD with our sampled hidden layers \(\mathcal{F}_M\) and \(\mathcal{G}_{\Hat M}\) as the dictionary. We then find the optimal control \(\Hat \vx_{t+1} = -\Hat Z(\vh_t - \vh^*)\). Notice that \(\Hat \vx_{t+1} \in \mathbb{R}^{\Hat M}\), and not in the control input space \(\mathcal{X}\). Before we can evolve to the next state \(\vh_t\) we need to project down to the controlled input space \(\mathcal{X}\). Let \(\Hat C = X[\mathcal{G}_{\Hat M}(X)]^+\). Then we have 
\begin{align}\label{eq:rnn_for_mpc}
    \vh_t &= C(K\mathcal{F}_M(\vh_{t-1}) + B\mathcal{G}_{\Hat M}(\vx_{t})),\nonumber\\
    \vx_{t+1} &= Z\vh_{t}- z = -\Hat C \,\Hat Z(\vh_{t}-\vh^*), \\
    \vy_t &= V\vh_{t}\nonumber.
\end{align}
\Cref{eq:rnn_for_mpc} thus shows how one can  solve a nonlinear optimal control problem using linear control of KIRNNs.

All the methods above can then be summarized by sampling weights in \Cref{alg:sampling_params}, constructing the RNN in \Cref{alg:sampling_rnn}, and nonlinear control in \Cref{algo:model_predictive_control}. The computational complexity of the full algorithm is dominated by the number of neurons \(M\) for the network \(\mathcal{F}_M\) acting on the state and \(\Hat M\) for the network \(\mathcal{G}_{\Hat M}\) acting on the control, assuming \(M, \Hat M \geq \max\{d_h, d_x\}\). As the sampling part of the proposed algorithm is efficient, the bottleneck is rather the inverse operations when computing \(C\), \(K\), \(B\), \(Z\), and \(V\). They are cubic in terms of \(M\) and \(\Hat M\), where the size of \(M\) dominates cubically for \(C\), \(K\), and \(V\), while \(\Hat M\) dominates cubically the computation for \(B\) and \(Z\). As the training procedure for the parameters of the sampled RNN is not iterative, these fairly expensive least-squares computations only need to be done once. In addition, if a problem requires to sample many neurons, meaning \(M\) or \(\Hat M\) is large, algorithms concerning pseudoinverse and least square solutions have been studied extensively \citep{meng-2014}.

\begin{minipage}{0.48\textwidth}
    \begin{algorithm}[H]
        \caption{Sampling weights and bias for a given dataset and probability distribution.}\label{alg:sampling_params}
        \begin{algorithmic}
            \Procedure{SAMPLE-Layer}{$Z, \mathbb{P}_{\mathcal{Z}}$}
            \State $W_z \in \mathbb R^{M\times d_z}, \vb_z \in \mathbb{R}^{d_z}$
            \For{$j=1,2,\dots M$}
                \State Sample $(\vz^{(1)}, \vz^{(2)}) \sim \mathbb{P}_z$ from $Z \times Z$
                \State $W_z^{[j,:]} = \frac{\vz^{(2)} - \vz^{(1)}}{\lVert \vz^{(2)} - \vz^{(1)}\rVert^2}\tran$ 
                \State $\vb_z^{[j]} = - \langle (W_z^{[j,:]})\tran, \vz^{(1)} \rangle$
            \EndFor
            \State Return $W_z, \vb_z$
            \EndProcedure
    \end{algorithmic}
    \end{algorithm}
\end{minipage}
\hfill
\begin{minipage}{0.46\textwidth}
    \begin{algorithm}[H]
    \caption{Constructing KIRNNs for controlled setting.}\label{alg:sampling_rnn}
    \begin{algorithmic}
      \Procedure{SAMPLE-RNN}{$X, Y, H, H'$}
      \State $W_x, \vb_x \leftarrow \text{SAMPLE-Layer}(X, \mathbb{P}_X)$
      \State $W_h, \vb_h \leftarrow \text{SAMPLE-Layer}(H, \mathbb{P}_H)$
      \State $\mathcal{F}_M(\cdot), \mathcal{G}_{\Hat{M}}(\cdot)  \leftarrow \sigma(W_h \, \cdot + \vb_h), \sigma(W_x \, \cdot + \vb_x)$
      \State $[K,B] = \mathcal{F}_M(H')(\mathcal{F}_M(H) \oplus \mathcal{G}_{\Hat M}(X))^+$
      \State $C = H\, \mathcal{F}_M(H)^+$
      \State $V = Y H^+$
      \State Return $V, C, K, \mathcal{F}_M, B, \mathcal{G}_{\Hat M}$
      \EndProcedure
  \end{algorithmic}
\end{algorithm}
\end{minipage}

\begin{minipage}{1.0\textwidth}
\begin{algorithm}[H]
    \caption{Nonlinear optimal control of system \(F\) using LQR and KIRNN, with cost matrices \(Q, R, S\). The procedure combines fitting the model, and prediction of the trajectory for initial condition \(\vh_0\), target state \(\vh^*\), and \(t = 1, 2, \dots, T\).}\label{algo:model_predictive_control}
    \begin{algorithmic}
        \Procedure{Nonlinear-control}{$X, Y, H, H',$ T, $\vh_0, \vh^*$}
        \State $V, C, K, \mathcal{F}_M, B, \mathcal{G}_{\Hat M} \leftarrow \text{SAMPLE-RNN}(X, Y, H, H')$
        \State $\Tilde P \leftarrow (R + B\tran P B)^{+}$
        \State $P \leftarrow K\tran P K - \left[(K P B + S)\Tilde P (B\tran P K + S\tran)\right] + Q$
        \State $Z \leftarrow \Tilde P (B\tran P K + S\tran)$
        \State $\Hat C \leftarrow X[\mathcal{G}_{\Hat M}(X)]^+$
        \State $\vx_1 \leftarrow -\Hat C Z(\vh_{0} - \vh^*)$
        \For{$t=1,2,\dots, $T}
            \State $\vh_t \leftarrow C(K\mathcal{F}_M(\vh_{t-1}) + B\mathcal{G}_{\Hat M}(\vx_{t}))$
            \State $\vx_{t+1} \leftarrow -\Hat C \Hat Z(\vh_{t} - \vh^*)$
            \State $\vy_t = V\vh_t$
        \EndFor
        \State Return $\{\vx_t, \vh_t, \vy_t\}_{t=1}^T$
        \EndProcedure
    \end{algorithmic}
\end{algorithm}
\end{minipage}

\subsection{The role of nonlinearity}\label{sec:convergence_rnn_main}
The role of the activation function is to provide nonlinearity to neural networks, which is crucial to obtain universal approximation. In KIRNNs, the approximation of the Koopman operator also requires a nonlinear dictionary to be accurate. Currently, from our \Cref{def:RNN}, we see that each state is lifted to a higher-dimensional space followed by an activation function, for every timestep \(t \in \mathbb{N}_{>0}\). By extending results from Koopman theory, we can shed light on the approximation error of the network over several time steps.

We start by defining \(L^2 \coloneqq L^2(\mathcal{H}, \mu_h)\) and stating the required assumptions for our results. 
\begin{assumption}\label{assum:main_paper}
    The assumptions on \(\mu_h\), \(\mathcal{F}_M\), \(\mathcal{K}\), and the underlying system \(F\) are the following. 
    \begin{enumerate}
        \item \(\mu_h\) is regular and finite for compact subsets.
       \item The hidden layer \(\mathcal{F}_{M}\) satisfies \(\mu_h \{\vh \in \mathcal{H} \mid \vc\tran \mathcal{F}_M(\vh) = 0\} = 0\), for all nonzero \(\vc \in \mathbb R^{M}\).
     \item The Koopman operator \(\mathcal{K} \colon L^2\rightarrow L^2\) is a bounded operator.
    \end{enumerate}
\end{assumption}
The first two points are not very restrictive and hold for many measures and activation functions, including the \(tanh\) activation function and the Lebesgue measure. It is worth noting that ReLU does not satisfy \Cref{assum:main_paper}, but we suspect continuous versions of ReLU can be shown to satisfy it; see \Cref{lemma:assump_satisfied}. The third assumption is common in Koopman approximation theory for showing convergence. It holds for a broad set of dynamical systems (See \Cref{sec:proof_uncontrolled_theorem} for further discussion of all three points). Finally, we also require \(\mathcal{H}\) to follow \Cref{def:input_space} in \Cref{sec:proof_uncontrolled_theorem}, which allows the use of universal approximation theory for the weight space restricted to \(\mathcal{X}\times \mathcal{X}\) \cite{bolager-2023}.

We now denote \(L^2_{d_y}\) as the space of vector valued functions functions \(f = [f_1, f_2, \dots, f_{d_y}]\), where \(f_i \in L^2\) and \(\lVert f\rVert_{L^2_{d_y}} = \sum_{i=1}^{d_y} \lVert f_i \rVert_{L^2}\). We let \(F^t(\vh_0) = \vh_{t}\) be the true state after time \(t\), and \(K_N\) be the solution of \Cref{eq:koopman_approx_main}, where \(N\) data points have been used to solve the least square problem. 
\begin{theorem}\label{thm:main_paper_finite_horizon}
    Let \(f \in L^2_{d_y}\), \(H, H'\) be the dataset with \(N\) data points used in \Cref{eq:koopman_approx_main}, and \Cref{assum:main_paper} holds. In addition we assume \(\mathcal{H}\) follows \Cref{def:input_space}. Then for any \(\epsilon > 0\) and \(T\in \mathbb{N}\), there exist an \(M\in\mathbb{N}\) and hidden layers \(\mathcal{F}_M\) and matrices \(C\) such that 
    \begin{align*}
        \lim_{N\rightarrow \infty} \int_{\mathcal{H}} \lVert C K_N^t \mathcal{F}_M - f \circ F^t \rVert_2^2d\mu_h < \epsilon,
    \end{align*}
    for all \(t \in [1,2,\dots,T]\). 
\end{theorem}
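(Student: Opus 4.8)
The plan is to split the argument into an inner data limit $N\to\infty$ at a fixed dictionary, followed by a dictionary refinement $M\to\infty$. First I would fix a dictionary of $M$ sampled features $\psi_1,\dots,\psi_M$ (the rows of $\mathcal{F}_M$) and show that the empirical EDMD matrix converges to the compression of the Koopman operator. Writing $K_N=\big(\frac1N\mathcal{F}_M(H')\mathcal{F}_M(H)^\top\big)\big(\frac1N\mathcal{F}_M(H)\mathcal{F}_M(H)^\top\big)^{+}$, the law of large numbers (with data drawn from $\mu_h$) drives the two empirical matrices to the $L^2(\mu_h)$ Gram matrix $G$ with entries $\langle\psi_i,\psi_j\rangle$ and to the matrix $A$ with entries $\langle \mathcal{K}\psi_i,\psi_j\rangle$. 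The linear-independence condition (second item of Assumption~\ref{assum:main_paper}) guarantees that $G$ is invertible, so the pseudoinverse is continuous at the limit and $K_N\to K_M$, the matrix of the compressed operator $\mathcal{K}_M:=P_M\mathcal{K}P_M$, where $P_M$ is the orthogonal projection of $L^2$ onto $V_M=\mathrm{span}(\psi_1,\dots,\psi_M)$. This is the EDMD data-limit statement of Korda--Mezić, which I would cite and adapt. Since for fixed $M$ everything is finite-dimensional, continuity of $K\mapsto K^t$ and of the linear map $C$ yields
\[
\lim_{N\to\infty}\int_{\mathcal H}\|C K_N^t\mathcal{F}_M - f\circ F^t\|_2^2\,d\mu_h = \int_{\mathcal H}\|C K_M^t\mathcal{F}_M - f\circ F^t\|_2^2\,d\mu_h .
\]

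Next I would reinterpret the finite-dimensional object $C K_M^t \mathcal{F}_M$ as an operator acting in $L^2$. A direct computation with the compressed operator shows $(K_M^t\mathcal{F}_M)_i=\mathcal{K}_M^t\psi_i$, so componentwise $C K_M^t\mathcal{F}_M=\mathcal{K}_M^t(C\mathcal{F}_M)$. Because the features are linearly independent, I can choose the matrix $C$ so that $C\mathcal{F}_M=P_M f$ (the componentwise orthogonal projection of $f$ onto $V_M$); this is where $C$ is constructed. Using $P_M^2=P_M$ one has $\mathcal{K}_M^t P_M=\mathcal{K}_M^t$ for $t\ge1$, so, since $f\circ F^t=\mathcal{K}^t f$, the error collapses to $\|(\mathcal{K}^t-\mathcal{K}_M^t)f\|$ in $L^2_K$. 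The problem is thus reduced to a purely operator-theoretic finite-horizon bound on $\mathcal{K}^t-\mathcal{K}_M^t$.

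Finally, for the $M\to\infty$ limit I would use the telescoping identity $\mathcal{K}^t-\mathcal{K}_M^t=\sum_{s=0}^{t-1}\mathcal{K}_M^s(\mathcal{K}-\mathcal{K}_M)\mathcal{K}^{t-1-s}$ and bound each summand by $\|\mathcal{K}_M^s\|\,\|(\mathcal{K}-\mathcal{K}_M)\mathcal{K}^{t-1-s}f_k\|$. The boundedness of $\mathcal{K}$ (third item of Assumption~\ref{assum:main_paper}) gives $\|\mathcal{K}_M\|\le\|\mathcal{K}\|\le B<\infty$ since projections contract, so all prefactors are bounded by $B^{T}$. Writing $\mathcal{K}-\mathcal{K}_M=(I-P_M)\mathcal{K}+P_M\mathcal{K}(I-P_M)$, each term applied to the \emph{fixed} function $\mathcal{K}^{t-1-s}f_k$ tends to $0$ once $P_M\to I$ strongly on the finitely many functions $\{\mathcal{K}^r f_k:0\le r\le T,\ 1\le k\le K\}$. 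Strong convergence $P_M\to I$ is supplied by universal approximation: the span of single-hidden-layer $\tanh$ features is dense in $L^2(\mathcal H,\mu_h)$ under the regularity and finiteness of $\mu_h$ (first item of Assumption~\ref{assum:main_paper}), and since the theorem only asserts existence of a dictionary, I may select $M$ and $\mathcal{F}_M$ realizing this approximation. Summing the finitely many terms (at most $T$ values of $s$, $K$ components, $T$ horizons) makes the total error smaller than $\epsilon$ uniformly over $t\in\{1,\dots,T\}$.

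The step I expect to be the main obstacle is controlling the accumulation of the one-step projection error $(\mathcal{K}-\mathcal{K}_M)$ across the horizon while holding $M$, and hence the dictionary, fixed: the telescoping produces $t$ error terms, each amplified by operator powers, and only the boundedness of $\mathcal{K}$ together with the finiteness of $T$ keeps the amplification at the finite level $B^{T}$ rather than letting it diverge. A secondary point requiring care is the continuity of the pseudoinverse in the $N\to\infty$ step, which is valid precisely because the linear-independence assumption forces the limiting Gram matrix to be full rank.
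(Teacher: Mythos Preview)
Your proposal is correct and amounts to an alternative, more self-contained organisation of the same ingredients. The paper's proof fixes $C$ (written there as a row vector $c^\top$, after reducing to $K=1$) so that $f_m := c^\top\mathcal{F}_M$ approximates $f$ in $L^2$ via density of $\tanh$ networks, and then splits
\[
\|c^\top K_N^t\mathcal{F}_M - \mathcal{K}^t f\| \;\le\; \|c^\top K_N^t\mathcal{F}_M - \mathcal{K}^t f_m\| + \|\mathcal{K}^t(f_m - f)\|,
\]
controlling the second term by $\|\mathcal{K}\|_{op}^{t}\,\|f_m - f\|$ and discharging the first term wholesale by citing Theorem~5 of Korda--Mezi\'c (2018), which already bundles the $N\to\infty$ data limit with the finite-horizon compressed-operator error. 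Because that theorem is stated for dictionaries forming an orthonormal basis, the paper also inserts an auxiliary lemma showing that Gram--Schmidt applied to the sampled features leaves $c^\top K_N^t\mathcal{F}_M$ invariant.

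You instead unpack the two limits: first $K_N\to K_M$ at fixed $M$ (law of large numbers plus continuity of the pseudoinverse at the invertible Gram limit), and then the explicit telescoping bound on $\|(\mathcal{K}^t-\mathcal{K}_M^t)f\|$ using $\|\mathcal{K}_M\|\le\|\mathcal{K}\|$. This is essentially a re-derivation of the cited Korda--Mezi\'c result, but it has two small advantages: it works directly with the non-orthonormal sampled features, so the orthonormalisation lemma is not needed, and it makes the role of each hypothesis---linear independence for the Gram inverse, boundedness for the $B^{T}$ amplification factor, density for $P_M\to I$ on the finite set $\{\mathcal{K}^r f_k\}$---fully explicit. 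The paper's version is shorter precisely because it outsources this step to the literature.
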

\begin{proof}
We give an outline of the proof here, while the full proof can be found in \Cref{sec:proof_uncontrolled_theorem}. 
\begin{itemize}
    \item In \Cref{lemma:assump_satisfied}, we show how parts of the assumption made are satisfied by a large class of activation functions, which include \(tanh\).
    \item In \Cref{lemma:onb}, we loosen some of the assumptions made in previous EDMD convergence results \citep{korda-2018b}, so that they apply to neural networks.
    \item Finally, in \Cref{thm:existence_appendix} we combine the results of \citet{korda-2018b} for EDMD convergence and universal approximation to produce the result above. Here, the universality property of neural networks is used to make sure \(CF_M\) can approximate \(f\), while the approximation of the underlying dynamics is done through convergence of \(K_N\) to \(\mathcal{K}\).
\end{itemize}
\end{proof}
For prediction of the system output, as the identity function \(Id(\vh)\mapsto \vh\) is in \(L^2_{d_h}\), the result above implies convergence of \(\int_{\mathcal{H}} \lVert C K_N^t \mathcal{F}_M - F^t \rVert_2^2d\mu_h\). In \Cref{sec:theory_for_controlled} we discuss the limitations of the result w.r.t. the controlled setting.

The result above shows that nonlinearity is only required to create a high-dimensional representation of the initial state; once that is achieved, the resulting RNN can be a fully linear model. This may open up avenues for combining results from linear RNNs \citep{li2021curse}, to show similar results for larger function classes. The results, interestingly, do not include ReLU activation functions, due to the assumptions made in \Cref{assum:main_paper}, and it is unclear whether it is easy to extend the results to include ReLU. It is also worth noting that the results provide an insight into the learning algorithm itself, and future work may look into how to extend \Cref{thm:main_paper_finite_horizon} to a probabilistic result that takes into account also the sampling of the network parameters, similar to results for feed-forward networks \citep{rudi-2017}.

Empirical results reveal more nuance to the theoretical result. For many systems with a limited number of neurons and data points, projecting to the state space and then lifting it through the hidden layers \textit{in each iteration of the KIRNN} improves accuracy and stability, compared to simply applying \(CK^t\) to the initial condition to reach the state at time \(t\). This has been observed both in our experiments and in the EDMD literature \citep{constante-amoresricardo-2024}. It remains unclear exactly why this is the case, but we can make a few observations. Consider the \(d_h\)-dimensional manifold  induced by the hidden layer, namely \(\mathcal{M} \coloneq \mathcal{F}_M(\mathcal{H}) \subset \mathbb{R}^M\). Firstly, when the Koopman operator is applied elementwise to the hidden layer \(\mathcal{F}_M\) and then evaluated at \(\vh \in \mathcal{H}\), by definition we have \([\mathcal{K}\mathcal{F}_M](\vh) \in \mathcal{M}\). However, the mapping from \(\mathcal{F}_M(\vh)\) to \([\mathcal{K}\mathcal{F}_M](\vh)\) may not be linear in the \(M\)-dimensional subspace, because the linearity of \(\mathcal{K}\) only holds in the infinite dimensional function space. The truncation to \(M\) functions and using the approximation \(K\) instead may then move \(\mathcal{F}_M(\vh)\) away from \(\mathcal{M}\). By projecting down to the state space and mapping it back to \(\mathcal{M}\), we at least guarantee that the point is on the manifold, even if the map \(\mathcal{F}_M \circ C\) is not the optimal one. Intuitively, the bigger the difference between the manifold dimension \(d_h\) and \(M\) is, the smaller the difference between \(CK^2\mathcal{F}_M(\vh)\) and \(CK\mathcal{F}_M(CK\mathcal{F}_M(\vh))\) due to the fact that there are more orthogonal directions w.r.t. \(\mathcal{M}\) in \(\mathbb{R}^M\). However, the exact relationship, as well as how the geometry of \(\mathcal{M}\) affects the necessity of mapping down to \(\mathcal{M}\) at each timestep, remains unclear. Considering all the facts above, we find that following \Cref{def:RNN}, and in particular, \Cref{eq:rnn_for_mpc}, is beneficial for the number of neurons and size of dataset we are working with.

\section{Computational experiments}\label{sec:computational experiments}
We now discuss a series of experiments designed to illustrate the benefits and challenges of our approach and compare with existing approaches. 
The state-of-the-art recurrent architecture for modeling dynamical systems is shPLRNN by \cite{hess_generalized_2023}. 
Due to the similarities our method bears with reservoir models, we also compare with an established reservoir model, namely an echo state network (ESN). Both of these are explained in \Cref{sec:related_architectures}.

In every experiment in this section we assume the datasets \(H\) and \(H'\) consist of the full state \(\vh\), except for the experiment in \Cref{sec:time_delay}, where we only assume that partial observations of the state are available, and \Cref{sec:real-world data}, where it is unknown if the full state is observed for these real-world examples.
Hyperparameters for all models are given in \Cref{sec:model comparison}, the evaluation metrics are explained in \Cref{sec:metrics} and a further discussion as well as the hardware details are provided in \Cref{sec:model comparison}. The code to reproduce the experiments from the paper, as well as an up-to-date code base, is accessible at:
\begin{center}
\url{https://gitlab.com/fd-research/kirnn-paper}\\
\url{https://gitlab.com/fd-research/kirnn}.
\end{center}
In \Cref{tab:computational experiments}, we list the main quantitative results for most of the experiments. Each reported result stems from five different runs, where the random seed is changed in order to ensure a more robust result. We give the mean over these five runs, as well as the minimum and maximum among them. 

\begin{table}[ht]
  \caption{Results from computational experiments. We report the training time and MSE (mean squared error) or EKL (empirical Kullback–Leibler divergence, see \Cref{sec:metrics}) for a KIRNN (our approach), a reservoir model ESN (see \Cref{{sec:reservoirs}}) and a state of the art backpropagation-based RNN called shPLRNN (see \Cref{sec:ReLU_based_RNNS}).}\label{tab:computational experiments}
  \centering
  \begin{tabular}{llll}
    \toprule
    Example  & Model & Time [s]: avg (min, max) & MSE: avg (min, max)   \\
    \midrule
    Van der Pol &  KIRNN &  $0.26$ ($0.18$, $0.35$) & 9.55e-4 (7.08e-4, 1.28e-3)    \\ 
     &  ESN & $3.76$ ($2.29$, $5.40$) & 1.58e-2 (1.15e-2, 2.07e-2) \\ 
     &  shPLRNN & $217.93$ ($203.10$, $251.51$) & 1.39e-2 (5.66e-3, 3.00e-2)   \\ 
      \midrule
    1D Van der Pol&  KIRNN & $0.29$ ($0.25$, $0.31$) & 5.06e-3 (1.57e-4, 1.58-2)    \\
    \midrule
    Weather (day) & KIRNN  & $4.87$ ($4.83$, $4.92$) & $2.239^\circ C$ ($2.088^\circ C$, $2.392^\circ C$)  \\
    & ESN  & $2.17$ ($2.12$, $2.20$)  & $3.323^\circ C$ ($2.867^\circ C$, $3.962^\circ C$)  \\
     & shPLRNN  & $242.41$ ($225.74$, $269.35$)  & $2.174^\circ C$ ($1.803^\circ C$, $2.548^\circ C$)  \\
    \midrule
    Weather (week) & KIRNN & $4.87$ ($4.81$, $4.90$) & $4.624^\circ C$ ($4.169^\circ C$, $4.867^\circ C$)    \\
    & ESN & $3.49$ ($3.48$, $3.50$) & $7.238^\circ C$ ($6.144^\circ C$, $8.110^\circ C$)   \\
    & shPLRNN  & $268.6$ ($259.46$, $275.80$)  & $5.412^\circ C$ ($5.104^\circ C$, $5.861^\circ C$) \\
    \midrule
    
    Electricity consumption & KIRNN & $3.17$ ($3.11$, $3.28$) & $1.66 V$ ($1.61 V$, $1.70 V$)    \\
    & ESN & $3.31$ ($3.27$, $3.33$) & $2.418 V$ ($2.239 V$, $2.595 V$)   \\
    & shPLRNN  & $1330.25$ ($1226.26$, $1443.06$)  & $7.944 V$ ($7.927 V$, $ 7.968 V$) \\ 
    \midrule
         &  &  & EKL: avg (min, max) \\
    \midrule
    Lorenz-63 & KIRNN & $1.67$ ($1.34$, $1.92$) & 4.36e-3 (3.66e-3, 5.36e-3) \\
     & ESN & $3.54$ ($2.87$, $4.47$) & 8.73e-3 (7.20e-3, 1.06e-2) \\
     & shPLRNN & $607.42$ ($581.39$,$650.56$) & 5.79e-3(4.41e-3,7.56e-3) \\
    \midrule
    Rössler & KIRNN &  $5.36$ ($4.39$, $6.39$) & 1.57e-4 (5.86e-5, 3.82e-4)  \\
     & ESN &  $8.11$ ($7.94$, $8.31$) & 8.33e-5 (3.79e-5, 2.25e-4) \\ 
     & shPLRNN &  $866.17$ ($848.56$, $939.06$) & 6.53e-4 (4.35e-4,1.09e-3)  \\
    \bottomrule
  \end{tabular}
\end{table}

\subsection{Related Architectures}
\label{sec:related_architectures}
\subsubsection{SOTA for Dynamical Systems Modeling: ReLU-based RNNs
}\label{sec:ReLU_based_RNNS} 
Different RNN architectures to model dynamical systems, in particular chaotic ones, have been proposed, and with both theoretical and empirical backing. We start by a brief review of some key ReLU-based models. 

A piecewise linear RNN (PLRNN), introduced by \citet{koppe2019identifying}, has the generic form 
\begin{align}
 \vh_t = W_{h}^{(1)} \vh_{t-1} + W_{h}^{(2)} \sigma(\vh_{t-1}) + \vb_0 + W_{x} \vx_t,   
\end{align}
where $\sigma(\vh_{t-1})=\max(0, \vh_{t-1})$ is the element-wise rectified linear unit (ReLU) function, $W_h^{(1)} \in \mathbb{R}^{d_h \times d_h}$ is a diagonal matrix of auto-regression weights, $W_h^{(2)} \in \mathbb{R}^{d_h \times d_h}$ is a matrix of connection weights, the vector $\vb_0 \in  \mathbb{R}^{d_h}$ represents the bias, and the external input is weighted by $W_x \in \mathbb{R}^{d_h \times d_x}$. \citet{brenner22} extended this basic structure by incorporating a linear spline basis expansion, referred to as the dendritic PLRNN (dendPLRNN) 
\begin{align}\label{eq:dendPLRNN}
    \vh_t = W_h^{(1)} \vh_{t-1} + W_h^{(2)} \sum_{j=1}^J \alpha_j \, \sigma(\vh_{t-1} - \vb_j) + \vb_0 + W_x \vx_t,
\end{align}
where $\{\alpha_j, \vb_j\}_{j=1}^J$ represents slope-threshold pairs, with $J$ denoting the number of bases. This expansion was introduced to increase the expressivity of each unit’s nonlinearity, thereby facilitating dynamical systems modeling in reduced dimensions. Moreover, \citet{hess_generalized_2023} proposed the following “1-hidden-layer” ReLU-based RNN, which they referred to as the shallow PLRNN (shPLRNN)
\begin{align}\label{eq:shPLRNN}
     \vh_t = W_h^{(1)} \vh_{t-1} + W_h^{(2)} \sigma (W_h^{(3)} \vh_{t-1} + \vb_1) +\vb_0+ W_x \vx_t,
\end{align}
where $W_h^{(1)} \in \mathbb{R}^{d_h\times d_h}$ is a diagonal matrix, $W_h^{(2)} \in \mathbb{R} ^{d_h\times M}$ and $W_h^{(3)} \in \mathbb{R} ^{M\times d_h}$ are rectangular connectivity matrices, and $\vb_1 \in \mathbb{R}^{M}$, $\vb_0 \in \mathbb{R}^{d_h}$ denote thresholds. The combination of Generalized Teacher Forcing (GTF) and shPLRNN results in a powerful dynamical system modeling algorithm on challenging real-world data; for more information see \citet{hess_generalized_2023}. We also note that when $M>d_h$, it is possible to rewrite any shPLRNN as a dendPLRNN by expanding the activation of each unit into a weighted sum of ReLU nonlinearities \citep{hess_generalized_2023}.

Finally, a clipping mechanism can be added to the shPLRNN to prevent states from diverging to infinity as a result of the unbounded ReLU nonlinearity
\begin{align}\label{clipped_shPLRNN}
     \vh_t 
    = W_h^{(1)} \vh_{t-1} + W_h^{(2)} \big[\sigma (W_h^{(3)} \vh_{t-1} + \vb_1) 
   - \sigma (W_h^{(3)} \vh_{t-1}) \big] + \vb_0 + W_x \vx_t.
\end{align}
This guarantees bounded orbits under certain conditions on the matrix $W_h^{(1)}$ \citep{hess_generalized_2023}.

In our experiments, we use the clipped shPLRNN trained by GTF and compare it to our approach. KIRNN differs from the above mentioned methods in multiple ways: the training mechanism we use is not iterative, thus it typically requires less computation time; furthermore, we reach a linearized representation due to the Koopman connection, which allows the use of linear control methods.
\subsubsection{Reservoir Models: Echo State Networks}\label{sec:reservoirs}
As our newly proposed method bears similarities to a reservoir computing architecture, we have also trained reservoir models as part of our computational experiments. We used the echo state network (ESN) introduced by \citet{jaeger-2004}; here we briefly introduce the main ideas behind  ESNs, and we refer the interested reader to the review by \citet{lukosevicius-2009} for more details. 

An ESN consists of a reservoir and a readout. The reservoir contains neurons which are randomly connected to inputs and these are not trained, only initialized. Denoting the inputs as $\vh_t \in \mathbb{R}^{N_h}$ and output as $\vy_t \in \mathbb{R}^{N_y}$, and the internal reservoir states as $\vk_t\in \mathbb{R}^{N_k}$, the reservoir provides an update rule for the internal units as
\begin{align}\label{eq:reservoir_eq}
     \vk_{t+1} 
    = \sigma \left( W^{in} \vh_{t+1} + W \vk_{t} + W^{back} \vy_t \right),
\end{align}
for an activation function $\sigma$ and weight matrices $W^{in} \in \mathbb{R}^{N_k \times N_h}$, $W \in \mathbb{R}^{N_k \times N_k}$ and  $W^{back} \in \mathbb{R}^{N_k \times N_y}$. After the reservoir comes the readout, which maps the inputs, reservoir states, and outputs to a new output state 
\begin{align}\label{eq:reservoir_readout_eq}
     \vy_{t+1} 
    =\sigma_{out}  \left( W^{out} (\vh_{t+1} \oplus  \vk_{t+1}\oplus  \vy_t ) \right),
    \nonumber
\end{align}
where $\sigma^{out}$ is the output activation, $W^{out} \in \mathbb{R}^{N_y \times N_y}$ are output weights and $\vh_{t+1} \oplus \vk_{t+1} \oplus  \vy_t $ denotes the concatenation of $\vh_{t+1}$, $\vk_{t+1}$, and $\vy_t$.
 In the readout the model learns the connections from the reservoir to the readout, for example via (regularized) regression.  A so-called feedback connection allows for the readout values to be fed back into the reservoir, as shown in \Cref{eq:reservoir_eq}, establishing a recurrent relation.

In the context of reservoir models, the stability and expressivity of the model are often discussed. The stability is related to the so-called \textit{echo state property} \citep{jaeger-2004} which is a stability condition, which is typically satisfied if the reservoir weights are contractive, for example in the case of \textit{tanh} activation \citep{lukosevicius-2009}. The expressivity is often related to processes evolving over different time scales; one way to control this is by a \textit{leak rate} parameter \(\alpha\) which determines what proportion of the information from the previous state is passed on to the next state
\begin{align}
     \vk_{t+1} 
    = (1-\alpha) \vk_{t} + \alpha f \left( W^{in} \vh_{t+1} + W \vk_{t} + W^{back} \vy_t \right).
    \nonumber
\end{align}

The main strength of the reservoir computer is its very fast training time, which opens the door to neural architecture search \citep{strubell-2019, gallicchio-2020}, which is also the main strength of our method. However, due to the important influence the setup of a reservoir has on its performance, the hyperparameter search for a reservoir model is an exhaustive process subject to ongoing research \citep{trouvain-2020, mwamsojo-2024}. Our approach does not have a high number of tunable hyperparameters and does not require this additional optimization, thus it is perhaps a more suitable candidate for neural architecture search. Furthermore, with the connection to Koopman theory, our approach has a strong connection to dynamical systems.

\subsection{Simple ODEs: Van der Pol Oscillator}\label{sec:simple_ODE}
We consider the Van der Pol oscillator system for a simple illustration of our method. 
This is a non-conservative oscillatory system with a nonlinear damping term, described as a two dimensional ODE
\begin{align*}
    \begin{cases}   
        \dot{h_1} &= h_2 \\
        \dot{h_2} &= \mu (1 - h_1^2)h_2 - h_1,
    \end{cases}
\end{align*}
where $\mu$ is a scalar parameter indicating the nonlinearity and the strength of the damping. In our experiment we set $\mu= 1$, so the dynamics are only mildly nonlinear. 
The training data points are created by solving an initial value problem for $t \in [0, 20]$ with $\Delta t = 0.1$ for 50 initial conditions, where each initial condition is chosen at random, $\vh^0 \sim \text{Uniform}([-3,3]^2)$. We use an explicit Runge-Kutta method of order 8 to solve the initial value problem. Validation and test data are generated similarly but for $t \in [0, 50]$, which is a longer timespan than the training data.
We construct a KIRNN with a \(tanh\) activation function and a single hidden layer of width 80.
The model is then evaluated on test data; the averaged error and training time are reported in \Cref{tab:computational experiments}. One trajectory from the test set is visualized in \Cref{fig:VanDerPol}.
It should be noted that predictions are made in an autoregressive manner, in other words, we start with an initial condition from the test dataset which is used to make the first prediction, and afterwards continue using this prediction as an input to predict the next state, without information from the ground-truth dataset.
In the results we observe a very stable trajectory over a long prediction horizon, indicating stability of the model. This experiment is also significant due to the periodic nature of the system, which is captured with our model, although neural network architectures in general struggle to capture periodicity \citep{ziyin-2020}. Compared to the gradient-descent trained shPLRNN our method is much faster to train and achieves higher forecasting accuracy, with lower MSE as prediction error. Compared with an ESN, our model also has a shorter training time and a smaller error. Compared to both the shPLRNN and ESN methods, the hyperparameter search is simpler for our method since there are fewer hyperparameters to tune.

Our contribution rests on two fundamental ideas: data-aware sampling a hidden layer with SWIM, and using the Koopman operator for evolution in time, thus using two matrices (K and C) which could be replaced with one non-square matrix; thus we investigate the performance when one of these ideas is not employed. We consider four cases: 1) we sample from fixed distributions and do not use Koopman for time evolution, 2) we sample from fixed distributions and use Koopman, 3) we sample with SWIM and omit Koopman, and 4) use both SWIM and Koopman. Our findings are quantified in  \Cref{tab:impact of swim and koopman} and the results show that the combination of SWIM sampling and Koopman indeed outperforms naive approaches.

 \begin{table}[h]
 \caption{ Impact of using SWIM (data-aware) sampling, and using the Koopman operator compared to their exclusion. The results are obtained using the same Van der Pol problem setting explained previously.}
 \label{tab:impact of swim and koopman}
 \centering
 \begin{tabular}{cccc}
 \hline
 & \multicolumn{1}{c}{SWIM} & \multicolumn{1}{c}{Koopman} & \multicolumn{1}{c}{MSE avg (min, max)} \\
 \hline
 \parbox[t]{2mm}{\multirow{3}{*}{\rotatebox[origin=r]{90}{is used}}} 
 & X & X & 4.70e-2 (2.93e-2, 9.19e-2) \\
 & X & \checkmark & 1.46e-1 (2.14e-3, 6.82e-1) \\
 & \checkmark & X & 3.33e-2 (3.25e-2, 3.42e-2) \\ 
 & \checkmark & \checkmark & 9.55e-4 (7.08e-4, 1.28e-3) \\
 \hline
 \end{tabular}
 \end{table}
\subsection{Example with time delay embedding: Van der Pol Oscillator}\label{sec:time_delay}
For many real-world examples, it is not possible to observe the full state of a system. In this section, we use the same datasets as in the Van der Pol experiment (\Cref{sec:simple_ODE}) but now only consider the first coordinate \(h_1\) to be observable. We embed the data using a time-delay embedding of six followed by a principal component analysis (PCA) projection which reduces the dimensionality to two. A KIRNN with \(tanh\) activation and a single hidden layer of width 80 is trained. Predicted trajectories from the initial test dataset state are shown in the bottom row of the right column of \Cref{fig:VanDerPol}. The fit time and MSE error are provided in \Cref{tab:computational experiments}, they are fairly similar to the fit time and error for the example where the full state is observed indicating that this model also captures the true dynamics, but now only requires a short time series of \(h_1\) as an input.
\begin{figure}[h]
  \centering
  \includegraphics[width=.98\linewidth]{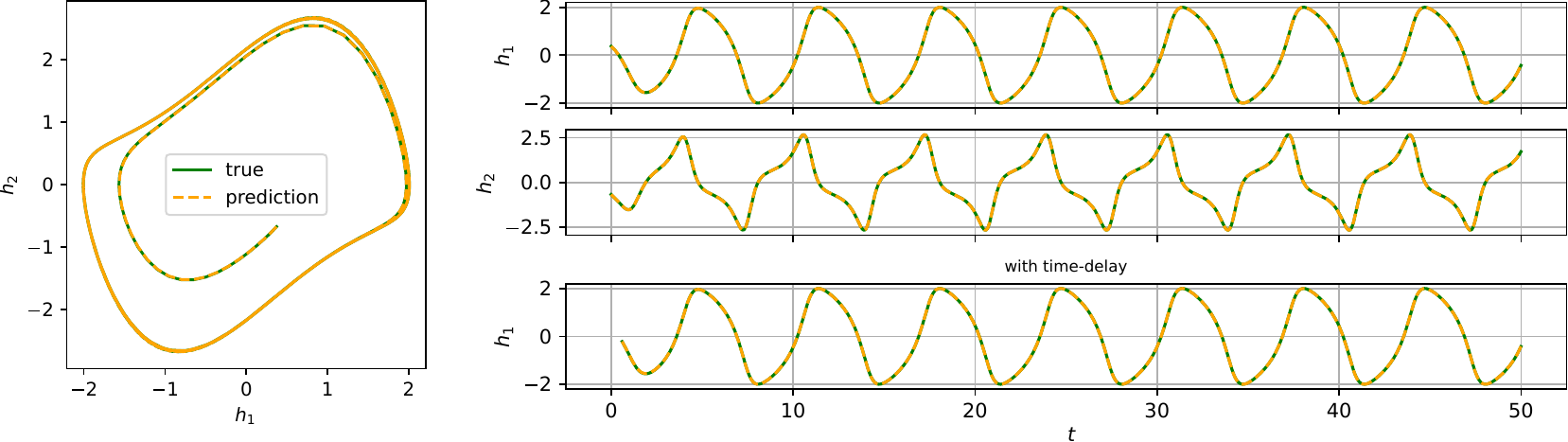}
  \caption{Comparison of true and predicted trajectories fror the Van der Pol experiments are shown for a test trajectory. Left: state space representation. Right: the top two rows show the full state system's first and second coordinate from \Cref{sec:simple_ODE}, and the bottom most row shows the partially observed system  from \Cref{sec:time_delay}.}  \label{fig:VanDerPol}
\end{figure}
\subsection{Examples of chaotic dynamics: Lorenz and R\"{o}ssler systems}\label{sec:chaotic systems}
Chaotic systems pose a challenging forecasting problem from the class of dynamical systems. As an example, we consider the well-known Lorenz and Rössler systems in the chaotic regime. 

The Lorenz-63 system \citep{lorenz-1963} is defined as
\begin{align*}
 \begin{cases}
        \dot{h_1} & = \sigma(h_2-h_1)\\ \nonumber
        \dot{h_2} & = h_1(\rho-h_3)-h_2\\ \nonumber
        \dot{h_3} & = h_1\,h_2-\beta h_3,
  \end{cases},
\end{align*}
where $\sigma, \rho, \beta$, are parameters that control the dynamics of the system. In our experiment, we set $\sigma=10$, $\beta=\frac{8}{3}$, and $\rho=28$, which means we are in the chaotic regime. 
Training data are generated by solving an initial value problem for $t \in [0, 5]$ with $\Delta t = 0.01$ for 50 initial conditions, where each initial condition is a random vector $\vh^0 \sim \text{Uniform}([-20,20] \times [-20,20] \times [0,50])$. We used an explicit Runge-Kutta solver of order 8. Validation and test data are generated similarly but for $t \in [0, 50]$. We normalize datasets to scale the values to the range $[-3, 3]$ to improve the training stability for the gradient-based method.

A KIRNN with a \(tanh\) activation and a single hidden layer of width 200 is trained. Naturally, due to the chaotic property of the system, agreement between predictions and numerical computations for long trajectories is not to be expected, since approximately similar states do not lead to approximately similar future states in chaos. However, predictions for a test trajectory, which is visualized in \Cref{fig:Lorenz}, confirms that the model has learned the underlying attractor.
Statistical estimates for this are reported in \Cref{tab:computational experiments}.

\begin{figure}[ht]
  \centering
  \includegraphics[width=1\linewidth]{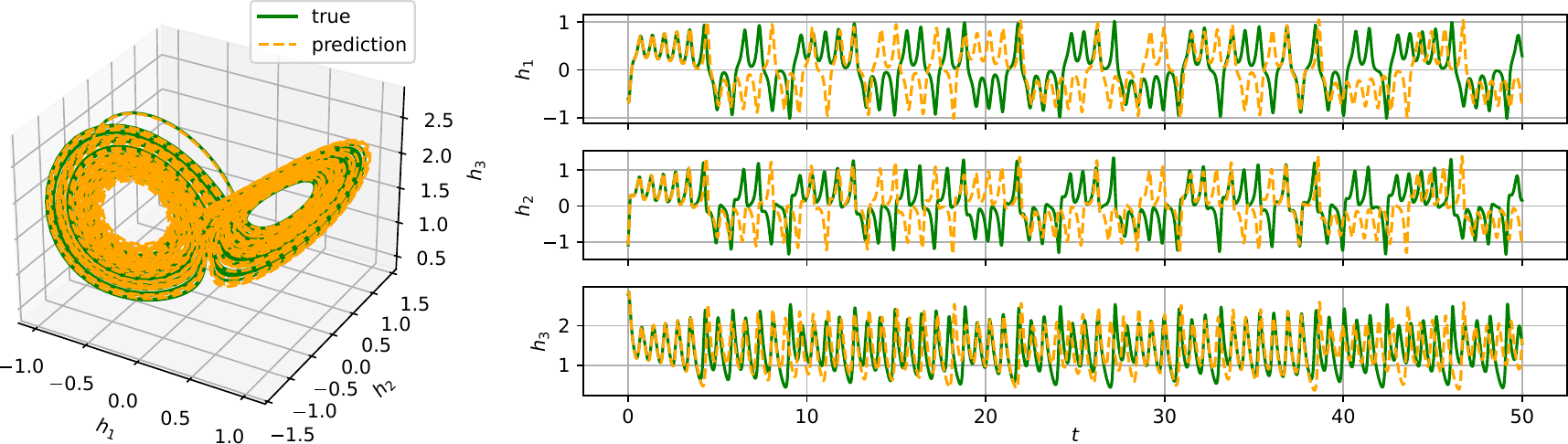}
  \caption{The results from the Lorenz experiment are shown for a test trajectory. \\ 
  Left: state space representation of true and predicted trajectories. Right: trajectories obtained from the Lorenz model described in \Cref{sec:chaotic systems}.}\label{fig:Lorenz}
\end{figure}

Furthermore, we consider the Rössler system \citep{rossler-1976} given by
\begin{align*}
  \begin{cases}
        \dot{h_1} & = -h_2-h_3\\ 
        \dot{h_2} & = h_1+\alpha h_2\\ 
        \dot{h_3} & = \beta+h_3(h_1 - \kappa),
  \end{cases}
\end{align*}
where $\alpha$, $\beta$, $\kappa$, are parameters controlling the dynamics of the system. Here, we set $\alpha = 0.15$, $\beta = 0.2$, and $\kappa = 10$, which puts the system in the chaotic regime. The setup for data generation is similar to the Lorenz example; training data are generated by solving an initial value problem for $t \in [0, 10]$ with $\Delta t = 0.01$ with an 8th order explicit Runge-Kutta solver for 50 initial conditions, where each initial condition is random vector $\vh^0 \sim \text{Uniform}([-20,20] \times [-20,20] \times [0,40])$. Validation and test data are generated similarly but for $t \in [0, 200]$, an even longer prediction horizon. We again normalize datasets to the range $[-3, 3]$ to aid iterative training.

We use a KIRNN with 300 hidden layer nodes and \(tanh\) activation. A predicted test trajectory is shown in \Cref{fig:Rossler}, and we observe that the model again captures the underlying attractor well, similar to the Lorenz experiment.

The quantitative results for both chaotic systems are given in \Cref{tab:computational experiments}. Due to the chaotic nature of the two systems, an MSE evaluation is not suitable, and we therefore use an empirical KL divergence (EKL) between the points on the attractor in the test set and our approximation, detailed in \Cref{sec:metrics}. We observe our model requires approximately half the fit time compared to the ESN and achieves comparable performance in terms of the EKL error. The good performance of ESN is not surprising for such common chaotic systems since good hyperparameters have been found as part of previous research efforts \cite{viehweg-2023}. 
The results also suggest that our method for chaotic systems is significantly faster compared to the gradient-based model, as can be observed in \Cref{tab:computational experiments}, with comparable accuracy. An additional aspect is the lower effort spent on hyperparameter tuning for our method, as compared to both alternatives.

\begin{figure}[ht]
  \centering
  \includegraphics[width=1\linewidth]{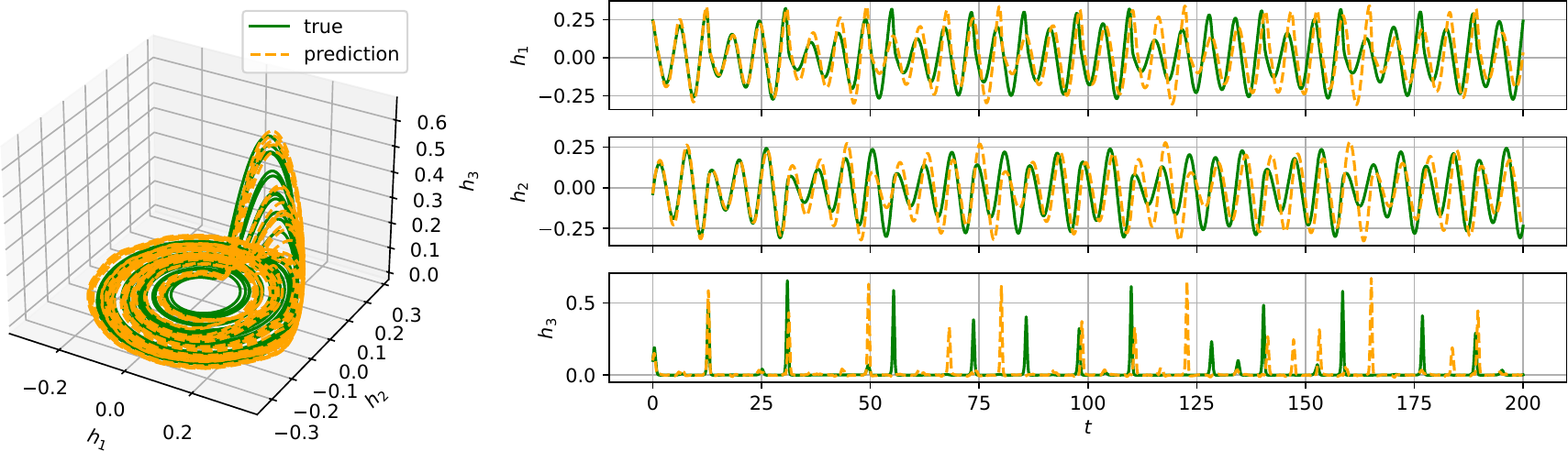}
  \caption{Trajectories from the Rössler experiment are shown for a test trajectory. Left: state space representation of true and predicted trajectories. Right: trajectories obtained from the Rössler model described in \Cref{sec:chaotic systems}.}\label{fig:Rossler}  
\end{figure}
\subsection{Interpretability via Koopman}\label{sec: interpretability}
We further explore the applicability of our method to a problem with unknown governing equations with real-world relevance and investigate interpretability in this context. 

In recent work by \citet{kern-2025}, the Koopman framework was used to obtain a predictive model for crowd dynamics and to extract underlying patterns of the dynamics. They find that a challenging nonlinear scenario is given by a periodic inflow of pedestrians into a corridor that narrows down and causes a congestion at the narrowing point. This congestion is typically referred to as a \textit{pedestrian bottleneck} in the crowd dynamics literature and is studied both in designed experiments \citep{liddle-2009} and as a model validation benchmark \citep{kleinmeier-2019}. We follow the setup by \citet{kern-2025} and use the density-based representation of the crowd where each pedestrian is represented with a Gaussian kernel (as proposed by \citet{seitz-2012}), and for the periodic inflow dataset we only consider the time period where the room is occupied with pedestrians, i.e. the period when the room is empty is cut out. Snapshots of the data (i.e., crowd density) are shown in \Cref{fig:crowd_density_data}. Instead of  using standard dictionary choices for EDMD, as done by \citet{kern-2025}, we employ the sampled nodes as a dictionary; we use a hidden layer with 100 nodes and a ReLU activation.

\begin{wrapfigure}{r}{0.33\textwidth}
  \begin{center}
    \includegraphics[width=0.30\textwidth]{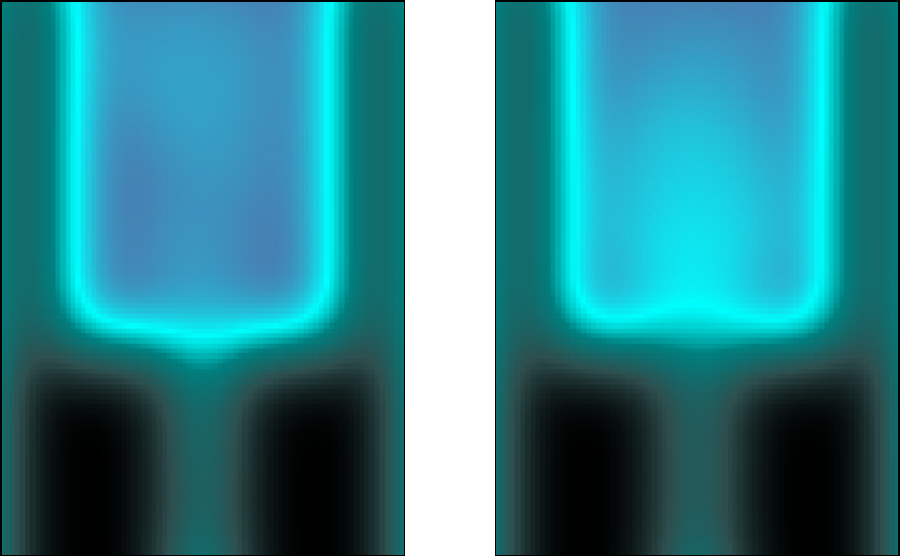}
  \end{center}
  \caption{Crowd density for a pedestrian bottleneck scenario. Left: no congestion. Right: congestion appears at corridor narrowing. Walls are distinguished by the dark/black regions.}
  \label{fig:crowd_density_data}
\end{wrapfigure}

In this experiment we study the Koopman operator by visualizing the modes and eigenvalues in \Cref{fig:crowd_density_modes,fig:crowd_density_eigenvalues}. The modes cover the scenario, with real and imaginary parts shown side by side. On the left the modes corresponding to the first eigenvalue are shown and their real part depicts the wall area of the domain, which remains unchanged throughout the simulation. The two other modes shown are for the second and fourth eigenvalue; the results for the third eigenvalue are ommited because they are very similar to the second eigenvalue. In these plots the dynamic part is seen which is likely related to the congestion and also picks up oscillatory patterns of the faster and slower moving parts of the crowd. Since we use numerical algorithms to approximate the Koopman operator, the discretization might lead to spurious eigenvalues, thus as a precaution we use an algorithm by \citet{colbrook-2024} to approximate the squared relative residual of eigenvalues and depict the residuals with a coloring scheme in \Cref{fig:crowd_density_modes}. For our model, the eigenvalues are all inside the unit circle, and some of them are associated with a high residual (the dark red points). Due to their small magnitude, they are not impacting the predictions significantly. As all eigenvalues have a magnitude smaller than 1 the predictions will remain stable.

\begin{figure}[h]
     \centering
     \begin{subfigure}{\textwidth}
         \centering
         \includegraphics[width=\textwidth]{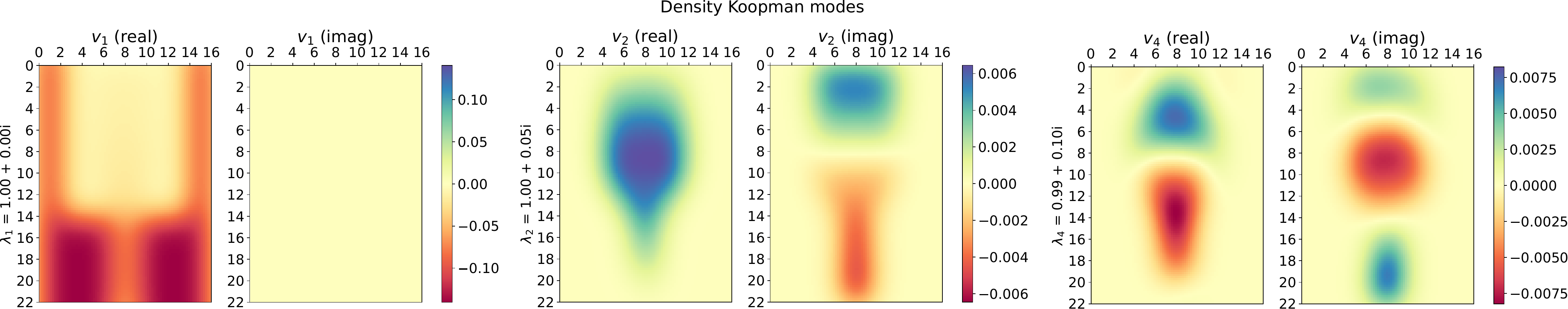}
         \caption{Koopman modes corresponding to the first (left), second (middle) and fourth (right) eigenvalues sorted by magnitude. The real and imaginary parts are shown separately side-by-side.}
         \label{fig:crowd_density_modes}
     \end{subfigure}
     \begin{subfigure}{0.33\textwidth}
         \centering
         \includegraphics[width=0.8\textwidth]
         {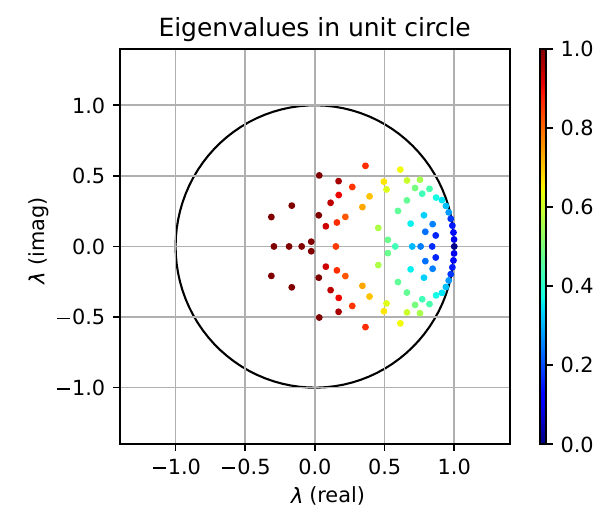}
         \caption{Eigenvalues of the Koopman operator in relation to the unit circle. The colormap shows the estimated residuals calculated according to Algorithm 3 by \citet{colbrook-2024}.}
         \label{fig:crowd_density_eigenvalues}
     \end{subfigure}
     \hspace{0.45cm}
     \begin{subfigure}{0.58\textwidth}
  \includegraphics[width=\textwidth]{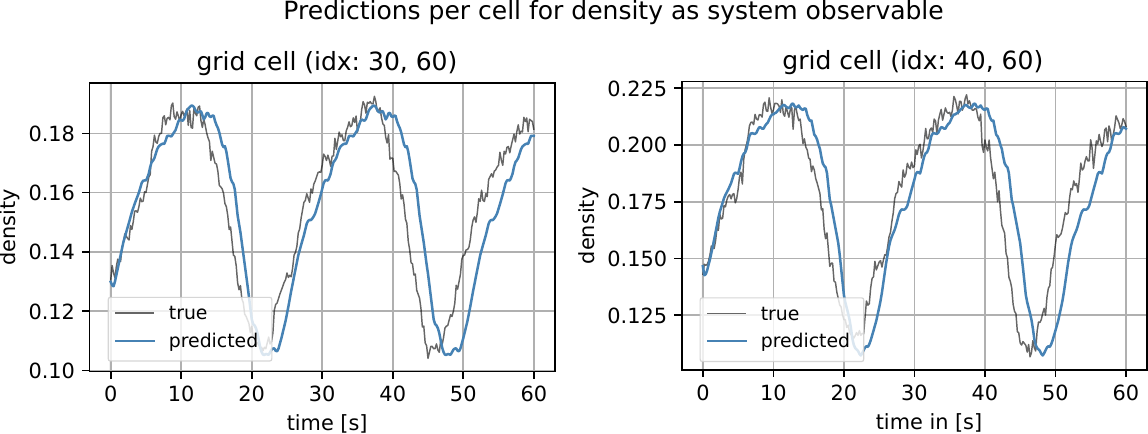}
         \caption{Comparison of the true (blue) and predicted (black) density at selected grid points. The left plot shows the density evolution for a point before the narrowing of the corridor which is slightly offset to the left. The right plot shows the density before the narrowing of the corridor and it is centered.}
         \label{fig:crowd_density_predictions}
     \end{subfigure}
     \caption{Results from the crowd density model are shown as operator properties (a, b) and predictive ability (c).}
     \label{fig:crowd_density}
\end{figure}

We emphasize that an analysis like this is only possible because of the direct connection of our RNN to Koopman theory. The predictions of our sampled network are shown in \Cref{fig:crowd_density_predictions} for two locations over time. The predictions are a smoother curve compared to the true density value and slight inaccuracies can be seen. The average fit time was 1.28 seconds, and evaluated on a test set the average MSE was \(2.10e-4\) (max: \(3.67e-4\), min: \( 7.56e-05\)).

\subsection{Example with control inputs: Controlled Van der Pol Oscillator}\label{sec:control inputs}
We consider again the Van der Pol oscillator, where now the second coordinate, $h_2$, is controlled with an external input $x$, and using a KIRNN model we perform nonlinear control as in \Cref{algo:model_predictive_control}.

The data is obtained for $t \in [0, 50 \Delta t]$ with $\Delta t = 0.05$, with 150 initial conditions, where $\vh^0 \sim \text{Uniform}([-3,3]^2)$ and $x^0 \sim \text{Uniform}([-3,3])$. To integrate the true system in time, we use an explicit Runge-Kutta method of order 5(4). The control input data $x$ in the training set are not obtained from a controller with a particular target state, but instead, random values of $x$ are applied to the trajectories over time.

The KIRNN consists of a hidden layer of width 32 for the control inputs (denoted as \(\mathcal{G}_{\hat{M}}\)) and a hidden layer of width 128 for the state (denoted as \(\mathcal{F}_M\)) and \(tanh\) activation. This network is then passed as a surrogate model to a LQR. Using the system matrix \( K \) and control input matrix \( B \) to solve an optimization problem, the LQR can successfully steer the state to the target state (see \Cref{fig:VdP_control}). This experiment highlights a key advantage of our model, which allows for modeling a nonlinear system such as the Van der Pol oscillator using a linear controller such as LQR, due to our definition of linear maps \( K \) and \( B \). This implies that the well-established tools from linear control theory can be applied to non-linear systems using our method.

We consider five different runs, where only the random seed is varied, and obtain the mean controller cost to be 13.96 (min: 5.43, max: 22.99) and the mean training time of 0.5699 seconds. The norm of the state is also tracked over time, for five different runs we show the norms and the pointwise mean (over the runs) in \Cref{fig:VdP_control}. The authors are not aware of applications of state-of-the-art recurrent networks for nonlinear control using LQR, thus there is no comparison. Our goal for this experiment was to show how KIRNNs are compatible with tools from linear control theory, in particular LQR. Further work is required to evaluate and compare this method to nonlinear control tools. In addition, choices such as how long one predicts before applying the control to the dynamical system is something we have looked into with different lengths, but more is left to understand its limitations.

\begin{figure}[h]
\begin{center} 
\includegraphics[width=0.9\linewidth]{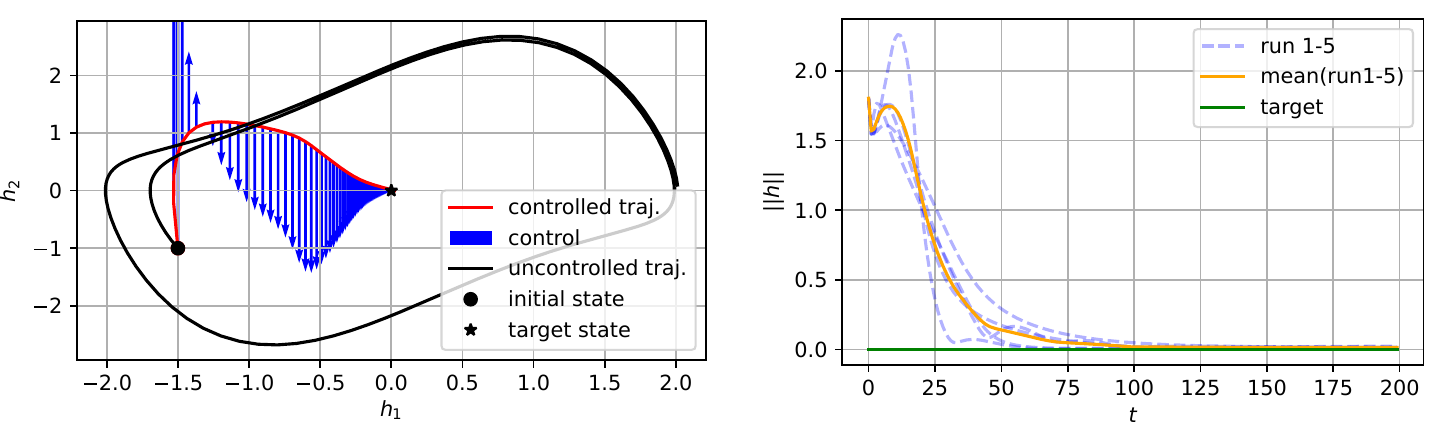}
\caption{Controlled (i.e. forced) Van der Pol experiment (\Cref{sec:control inputs}) for initial condition $\vh_0= [-1.5, -1]\tran$. Left: state space representation of controlled and uncontrolled trajectories. Right: $L^2$ norm of the controlled trajectory for five different runs and the $L^2$ norm of the target state.}\label{fig:VdP_control}
\end{center}
\end{figure}

\subsection{Examples with real-world data}
\label{sec:real-world data}
Finally, we will end this section with two examples using real-world data, namely weather data and individual power consumption.

\subsubsection{Weather data} 
We apply our approach to model the climate data presented in \citet{tfclimate-2024}. The dataset \citep{jenadata} contains a time series of 14 weather parameters recorded in Jena (Germany) between January 1st, 2009, and December 31st, 2016. The data contains inconsistent/missing date and time values, leading to gaps and overlaps between measurements. We extracted the longest consecutive time period and thus worked with the data between July 1st, 2010, and May 16th, 2013. We additionally downsampled the time series from the original 10-minute to 1-hour measurements. Then, the first 70\%  of records were used as the train set, the next 20\% as the validation, and the remaining 10\% as the test set. Identically to \citet{tfclimate-2024}, we pre-processed the features and added \texttt{sin} and \texttt{cos} time-embeddings of hour, day, and month. We plot the dataset, indicating the train-validation-test split with colors in \Cref{fig:weather_data}. 
\begin{figure}
  \centering
  \includegraphics[width=1\textwidth]{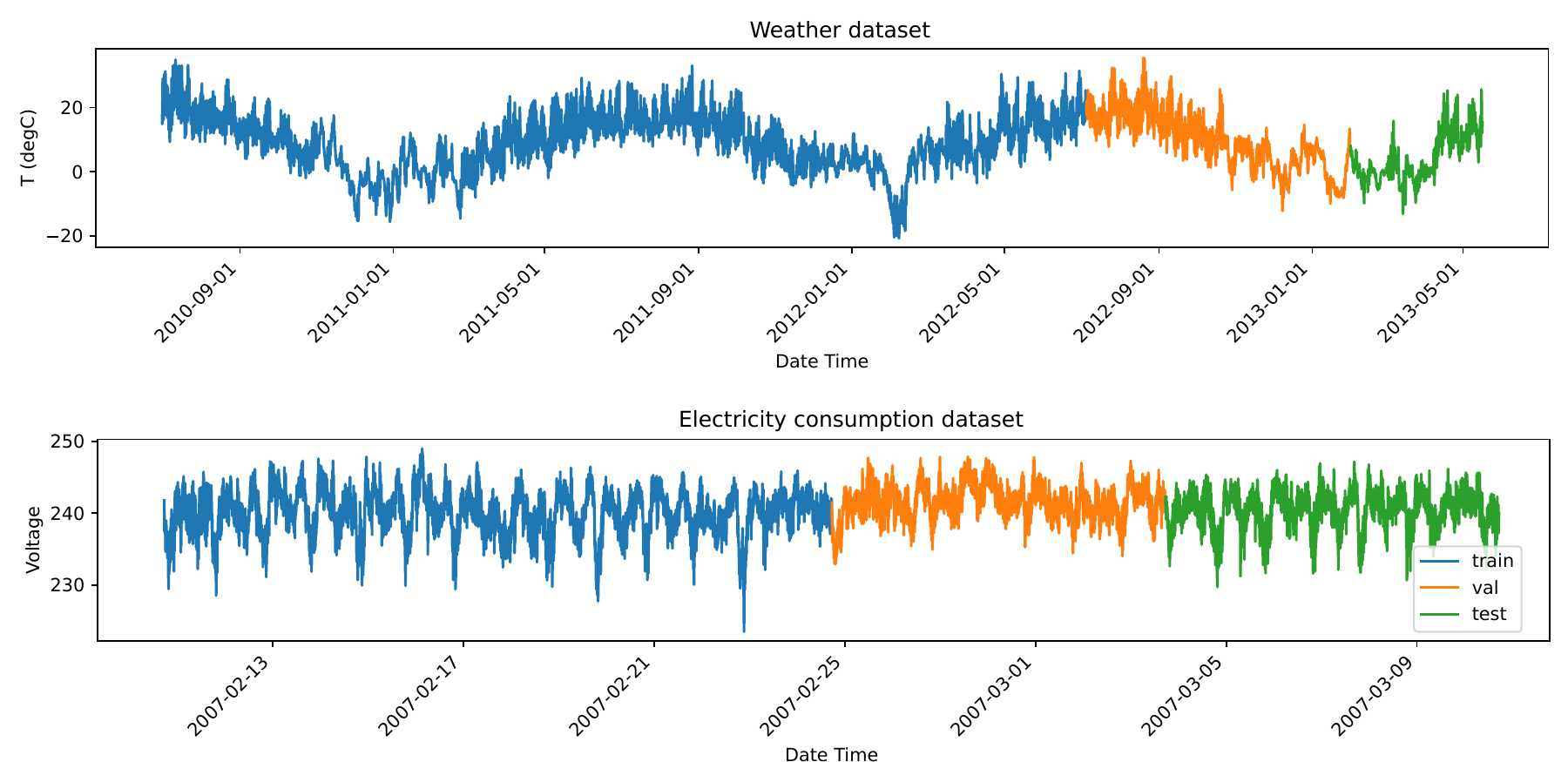}
  \caption{The two datasets for real-world experiments colored according to the data splits: train (blue), validation (orange), and test (green).} 
  \label{fig:weather_data}
\end{figure}
The data offers freedom in choosing the sizes of the time delay and prediction horizons. We decided to fix the time delay to one week and set two separate experiments with a prediction horizon of one day and one week. In the KIRNN and ESN experiments, we performed a grid search for each model with hyperparameters specified in \Cref{tab:hyperparameters_weather}. 
\Cref{tab:computational experiments} shows the averaged training time and error metrics for the two selected horizons (day and week). We observe that the 
models perform similarly in the case of a shorter horizon, while sampling offers much faster training compared to a gradient-based method. Compared to an ESN the training time of a KIRNN is a bit longer, but the search for hyperparameters was shorter for a KIRNN. When considering a one-week horizon, shPLRNN outperforms, while KIRNN is still orders of magnitude faster. We also note the prediction horizon does not influence the training time of the sampled model that agrees with the \Cref{alg:sampling_rnn}. When comparing predictions for the longer horizon in \Cref{fig:weather}, we notice that the ESN struggles to predict the high-frequency fluctuations of the measurement, but the KIRNN and shPLRNN successfully capture them. \Cref{fig:weather} also highlights the deficiency of the MSE metric because a low mean error does not always correspond to accurate predictions, as illustrated by all models. Overall, we conclude that KIRNNs can successfully capture chaotic real-world dynamics and produce results comparable to the iterative models while offering a significant speed-up in training.
\begin{figure}[ht]
    \begin{center} 
    \includegraphics[width=0.95\linewidth]{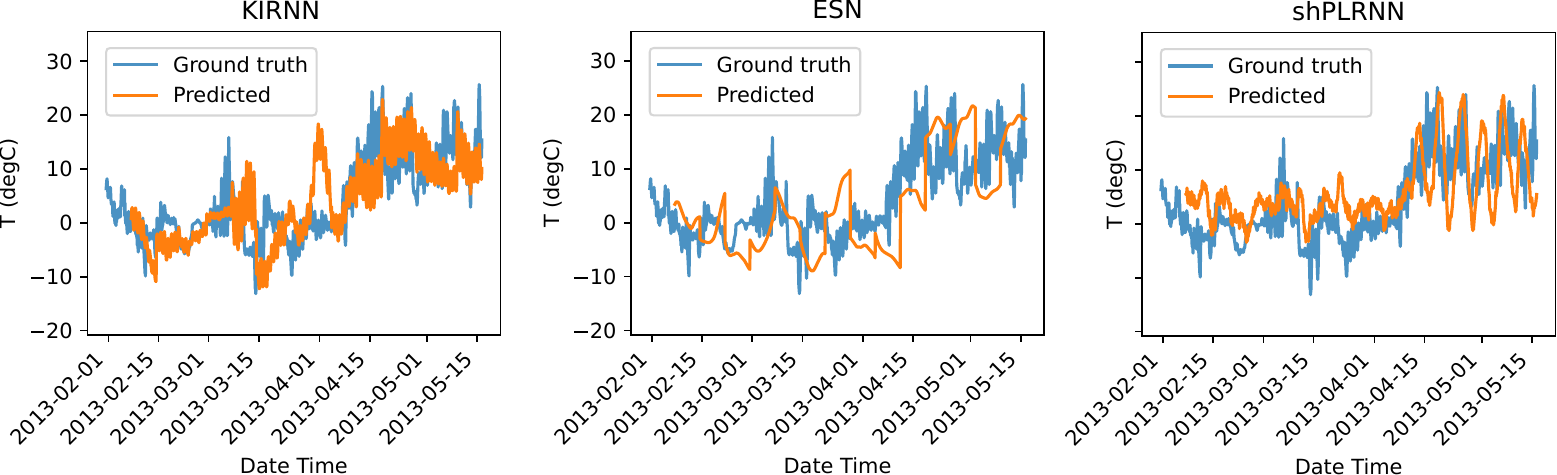}
    \caption{The predictions of the best models on the test dataset for the horizon of one week: KIRNN (left), ESN (middle) and shPLRNN (right).}
    \label{fig:weather}
    \end{center}
\end{figure}
\subsubsection{Electricity consumption}  
\begin{figure}
  \begin{center}
\includegraphics[width=0.95\linewidth]{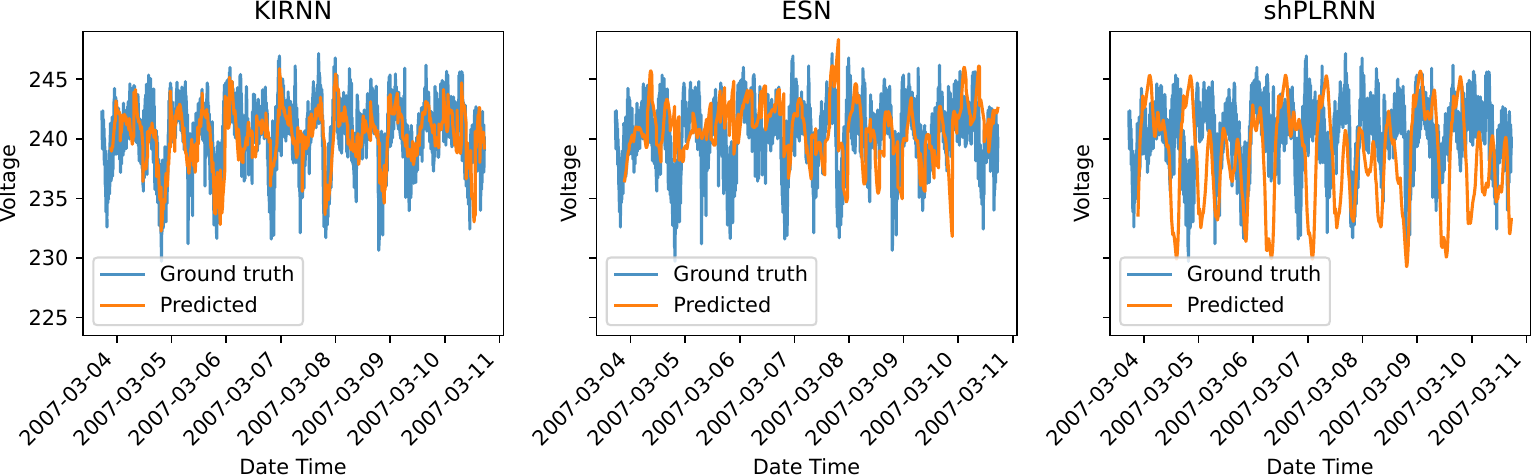}
\caption{Test set predictions of individual household electricity consumption in voltage by: KIRNN, (left) ESN (middle) and shPLRNN (right).}
\label{fig:electricity example}
\end{center}
\end{figure}
We use the individual power consumption dataset by \citet{electricity_dataset} and predict the voltage feature. We take an approach similar to the one for weather predictions. This dataset contains measurements made in a one-minute interval, and we consider a period of four weeks as our dataset. Two weeks are used as training data, one week as validation and one as test data, these are ordered sequentially in time, similar to the split of the weather dataset. We also here add \texttt{sin} and \texttt{cos} time-embeddings of hour and day as features. In the experiments performed we use a time-delay window of 240 consecutive time steps, which is 4 hours, as input. We then predict a horizon of 120 steps (so 120 minutes or 2 hours). The KIRNN has 64 hidden nodes and a $tanh$ activation. The results are plotted in \Cref{fig:electricity example}. Our results are reported in \Cref{tab:computational experiments} and \Cref{fig:electricity example}; we notice a good performance for the KIRNN both in terms of accuracy and training time, while the ESN has worse accuracy. The shPLRNN required extensive hyperparameter tuning and a notably higher hidden dimension to achieve acceptable performance, and it still has an overall worse performance for this task.

\section{Conclusion}
\label{sec:conclusion}
We introduce an efficient and interpretable training method for recurrent neural networks by combining ideas from random feature networks and Koopman operator theory.
Our method circumvents the common problems associated with conventional RNNs, such as EVGP, and presents a computationally efficient alternative that performs comparatively in terms of accuracy. This is done by sampling the hidden layers, and solving for the final linear layers with extended dynamic mode decomposition (EDMD). In addition, the connection we make to the Koopman operator allows us to apply tools from linear control theory to RNNs. We also use this connection to analyze the dynamics learned by studying the Koopman modes and the spectrum.

The training method we use involves the solution of a large, linear system. The complexity of solving this system depends cubically on the minimum number of neurons and the number of data points (respectively, time steps). This means if both the network and the number of data points are large, the computational time and memory demands for training may present challenges. The requirement of being able to express states of the system limits the approach. In addition, we observe that the training method does not perform well for intrinsically high dimensional data. This combined means that extending our approach to tasks such as natural language processing and computer vision remains out of reach without  modifications in architecture and training. 

Remaining challenges include extending the theory shown in this paper to controlled systems. We also wish to extend the theoretical results to include the sampling scheme. Bridging Koopman theory for continuous dynamical systems with NeuralODEs is also an interesting avenue for future research. For the controlled setting, we only demonstrated the use of linear control applied to KIRNN on a simple example enabling nonlinear control, but further experiments need to be done in order to compare this approach against nonlinear control methods. \citet{lazar-2025} proposes applying the Koopman operator to controlled systems differently from previous literature, and there are clear connections to our work, as we also lift the control input. Investigating these connections further would also be of interest. 

\subsubsection*{Acknowledgments}
E.B., I.B., and F.D. are thankful to the Deutsche Forschungsgemeinschaft (DFG, German Research Foundation) - for funding through project no. 468830823, and also acknowledge the association to DFG-SPP-229. F.D. and A.C. are supported by the TUM Georg Nemetschek Institute - Artificial Intelligence for the Built World. Z.M. is grateful to the Bundesministerium für Bildung und Forschung (BMBF, Federal Ministry of Education and Research) for funding through project OIDLITDSM, No. 01IS24061. We also thank Lukas Eisenmann and Florian Hess for their helpful suggestions and guidance in training the shPLRNNs used in this paper. Furtheremore, we are grateful to Daniel Lehmberg for developing and maintaining the \texttt{datafold} library which was useful for developing the codebase and the sampled RNN experiments. Finally, we are indebted to Sabrina Kern and Gerta Köster for sharing their codebase and data on the interepretability in pedestrian dynamics with Koopman.

\bibliographystyle{tmlr}
\bibliography{main}

\newpage
\appendix

\section*{Appendix}
\section{Koopman operator and extended dynamic mode decomposition}\label{sec:koopman_intro}
In this section we give a more thorough introduction to the Koopman operator and its use in dynamical system theory. In addition, we also explain extended dynamic mode decomposition (EDMD), which is used for the finite-dimensional approximation of the Koopman operator we use in the main paper. 
Consider a dynamical system $(\mathcal{H}, F)$, where the state space \(\mathcal{H}\) can be any topological space, and \(F \colon \mathcal{H} \rightarrow \mathcal{H}\) is a flow map of a dynamical system. We impose more structure on our system by requiring our state space \(\mathcal{H}\) to be a (finite or infinite-dimensional) Hilbert space with suitable measure \(\mu\), sigma algebra \(\Sigma\), and require \(F\) to be \(\Sigma\)-measurable. We start by considering \((\mathcal{H}, F)\) to be a discrete-time system, which are the systems we mainly work with in this paper. We can then write the evolution as
\begin{align}
 \vh_{t+1} \, = \, F (\vh_{t}),  \quad \vh_{t} \in  \mathcal{H} \subseteq \mathbb{R}^{d_h}, \quad t \in \mathbb{N}_{\geq 0}.
\end{align} 
The analysis of the evolution in the original state space can be difficult, especially when \(F\) is non-linear. Using Koopman theory we can take a different approach, and instead consider the evolution of observables \(\phi \colon \mathcal{H} \rightarrow \mathbb{C}\) instead of the states themselves. The evolution of the observables is then captured by the \textit{Koopman operator} \(\mathcal{K}\), 
\begin{align*}
    [\mathcal{K} \, \phi] (\vh) := (\phi \circ F) (\vh).
\end{align*}
As long as the space of observables is a vector space, the Koopman operator is linear and we may analyse the dynamical system with non-linear \(F\) using spectral analysis, with the caveat that in the majority of cases the domain of \(\mathcal{K}\) is infinite dimensional. The choice of this domain, which we call \(\mathcal{F}\) here, is crucial, as \(\phi \circ F\) must belong to \(\mathcal{F}\) for all \(\phi \in \mathcal{F}\). Assuming \(F\) is measure-preserving---which is common in ergodic theory---one can address the issue by setting 
\begin{align}\label{eq:l2_space}
    \mathcal{F}  =  L^2(\mathcal{H}, \mu_h ) \coloneq  \left\{ \phi \colon \mathcal{H} \rightarrow \mathbb{F} \,\, \middle |\,\,    \norm{\phi}_{L^2(\mathcal{H}, \mu_h )} = \left(\int_{\mathcal{H}} \lvert\phi(\vh) \rvert^2 \, \mu_h(d \vh)\right)^{\frac{1}{2}} < \infty \right\}.
\end{align}
where \(\mathbb{F} = \mathbb{R}\) or \(\mathbb{C}\). As we consider spectral analysis in this section, we let \(\mathbb{F} =\mathbb{C}\). If $F$ is measure-preserving, then $\mathcal{K}$ is an isometry and the issue is resolved. The map $\mathcal{K}$ might still not be well-defined, as $\phi_1, \phi_2 \in L^2(\mathcal{H}, \mu_h )$ may differ only on a null set, yet their images under $\mathcal{K} $, could differ over a set of positive measure. To exclude this possibility, $F$ must be $\mu_h$-nonsingular, meaning that for each $H \subseteq \mathcal{H}$, $\mu_h (F^{-1} (H)) =0 \,$ if $\mu_h  (H) =0$. 
Once the function space \(\mathcal{F}\) is chosen, making sure that \(\mathcal{K}\) is well-defined, we may apply spectral analysis. A Koopman eigenfunction $\varphi_{k} \in \mathcal{F}$
corresponding to a Koopman eigenvalue $\lambda_k \in \sigma(\mathcal{K})$ satisfies
\begin{align*}
    \varphi_{k}(\vh_{t+1}) \, = \, \mathcal{K} \varphi_{k}(\vh_t) \, = \, \lambda_k \, \varphi_{k}(\vh_t).
\end{align*}
When the state space \(\mathcal{H}\) is a subset of \(\mathbb{R}^{d_h}\), under certain assumptions on the space of eigenfunctions, we can evolve \(\vh\) using the spectrum of \(\mathcal{K}\). More concretely, let $\Phi \colon \mathcal{H} \rightarrow \mathbb{C}^{d_h}$ be a vector of observables, where each observable \(\phi_i(\vh) = \vh_i\), and \(\vh_i\) is the \(i\)th component of \(\vh\). Assuming $\phi_i  \in \text{Span} \{\varphi_{k} \} \subset \mathcal{F}$, we can write
\begin{align*}
    \phi_i(\vh)=\sum_k c_k^{\phi_i} \varphi_k.
\end{align*}
Here, the coefficients $c^{\phi_i}_k \in \mathbb{C}$ are often called \textit{Koopman modes} associated with the observable $\phi_i$.
We then have
\begin{align*}
    \mathcal{K} \, \phi_i \, = \, \mathcal{K}  \sum_{k} c_k^{\phi_i} \, \varphi_{k} \, = \,  \sum_{k}  c_k^{\phi_i} \, \mathcal{K}\, \varphi_{k} \, = \,  \sum_{k}  c_k^{\phi_i} \,  \lambda_k \, \varphi_{k},
\end{align*}
because the operator \(\mathcal{K}\) is linear. Iterative application of \(\mathcal{K}\) a number of $t\in\mathbb{N}$ times yields 
\begin{align}\label{eq:iterative_koopman}
    \vh_t = \Phi(\vh_t) \, = \, (\underbrace{[\mathcal{K} \circ \cdots \circ \mathcal{K}]}_{t}\phi)(\vh_0)  =   \sum_{k}  \lambda^t_k \, \, \varphi_{\lambda_k}(\vh_0)\, \, \vc^{\Phi}_k.
\end{align}
This process is known as \textit{Koopman mode decomposition} (KMD). This reveals one of the true strengths of the Koopman theory: understanding the stability of a system through analyzing the spectrum. 
Up until now we have considered a discrete dynamical system, but the Koopman theory can also be extended to continuous systems where \(\vh\) is a function of continuous time \(t\) and its evolution is given by
\begin{align*}
    \dot \vh(t) = v(\vh(t)), \quad \vh(t) \in \mathcal{H}, \, t\in \mathbb R_{\geq 0},
\end{align*}
with a vector field \(v\) such that integrating \(\dot{h}\) for time \(t\) results in the flow \(F^t\). For any \(t\), the flow map operator denoted by \(F^t\colon \mathcal{H} \rightarrow \mathcal{H}\) is defined as
\begin{align*}
    \vh(t) = F^t(\vh) = \vh(0) + \int_{0}^{t} v(\vh(\tau)) d\tau,
\end{align*}
which maps from an initial condition \(\vh(0)\) to point on the trajectory at time \(t\). We can then define the Koopman operator for each \(t \in \mathbb R_{\geq 0}\) ,
\begin{align*}
    [\mathcal{K}^t \, \phi](\vh) = (\phi \circ F^t)(\vh),
\end{align*}
where \(\phi \in \mathcal{F}\). The set of all these operators \(\{\mathcal{K}^t\}_{t\in \mathbb{R}_{\geq 0}}\) forms a semigroup with an infinitesimal generator \(\mathcal{L}\). With some assumption on the continuity of the semigroup, the generator is the Lie derivative of \(\phi\) along the vector field \(v(\vh)\) and can be written as
\begin{align*}
    [\mathcal{L}\,\phi](\vh) = \lim_{t\downarrow 0} \frac{[\mathcal{K}^t\, \phi](\vh) - \phi(\vh)}{t} = \frac{d}{dt} \phi(\vh(t))\Big|_{t=0} = \nabla \phi \cdot \dot \vh(0)  = \nabla \phi \cdot v(\vh(0)).
\end{align*}
The eigenfunction and eigenvalue are in the continuous time case scalars and functions satisfying
\begin{align*}
    [\mathcal{K}^t \varphi_k](\vh) = e^{\lambda_k  t}\,\varphi_k(\vh), 
\end{align*}
where \(\{e^{\lambda_k}\}\) are the eigenvalues of the operator \(\mathcal{K}^t\), and \(\lambda_k\) are eigenvalues of the generator \(\mathcal{L}\). This allows us to use the Koopman theory for continuous dynamical systems as well, and possibly make the connection for NeuralODEs and Koopman theory in similar fashion we have done with discrete system Koopman operator and RNN. 

As the Koopman operator is infinite dimensional it is not possible to apply it directly, which raises the need for a method to create a finite approximation of \(\mathcal{K}\) and its spectrum, namely the extended dynamic mode decomposition. 
\subsection{Extended dynamic mode decomposition}\label{sec:edmd_intro}
As we are mostly working with discrete systems in this paper, we focus on approximating the Koopman operator \(\mathcal{K}\) for discrete dynamical systems. The way to approximate \(\mathcal{K}\) is by extended dynamic mode decomposition (EDMD), which is an algorithm that provides a data driven finite dimensional approximation of the Koopman operator $\mathcal{K}$ through a linear map $K$. The spectral properties of $K$ subsequently serve to approximate those of $\mathcal{K}$. Utilizing this approach enables us to derive the Koopman eigenvalues, eigenfunctions, and modes. Here, we provide a brief overview of EDMD. For further details, refer to \citet{williams-2015}.
The core concept of EDMD involves approximating the operator's action on $\mathcal{F} \, = \, L^2(\mathcal{H}, \mu_h )$ by selecting a finite dimensional subspace $\widetilde{\Psi}_M \subset \mathcal{F}$. To define this subspace, we start by choosing a dictionary $\Psi_M \, = \, \{ \psi_i: \mathcal{H} \rightarrow \mathbb{R} \mid i=1, \dots, M\}$. We then have
\begin{align*}
    \Psi_M (\vh) =   
    [
        \psi_1(\vh) , \psi_2(\vh) , \dots , \psi_M(\vh)   
    ]\tran \in \mathbb{R}^{d_h},
\end{align*}
and we let the finite dimensional subspace \(\widetilde{\Psi}_M\) be 
\begin{align*}
    \widetilde{\Psi}_M \, = \, \text{Span} \{ \psi_1, \psi_2, \cdots, \psi_M \} \, = \, \{ \va\tran \Psi_M  \colon  \va \in \mathbb{C}^M\} \subset \mathcal{F}.
\end{align*}
The action of the Koopman operator on $\phi \in \widetilde{\mathcal{F}}_M$ due to linearity is
\begin{align*}
    \mathcal{K} \phi  =   \va\tran \mathcal{K} \Psi_M  =  \va\tran \Psi_M \circ F.
\end{align*}
Assuming that the subspace $\widetilde{\Psi}_M$ is invariant under $\mathcal{K}$, i.e., $\mathcal{K}(\widetilde{\Psi}_M ) \subseteq \widetilde{\Psi}_M$, we can write $\mathcal{K} \phi = \vb\tran \mathcal{F}_M$ for any \(\phi\in\widetilde{\Psi}_M\). It follows that \(\mathcal{K}\vert_{\widetilde{\Psi}_M}\) is finite dimensional and can be written as a matrix $K \in \mathbb{R}^{M \times M}$ such that $\vb\tran =  \va\tran K$. This means we can use a finite approximation to compute the action of the Koopman operator on \(\phi \in \widetilde{\mathcal{F}}_M\), i.e., 
\begin{align*}
    K \phi = \va\tran K \Psi_M =  \vb\tran \Psi_M = \mathcal{K} \phi.
\end{align*}
When $\widetilde{\Psi}_M$ is not an invariant subspace under the Koopman operator $\mathcal{K}$, \(K\) becomes an approximation. Decomposing $\mathcal{K} \phi \, = \,\vb\tran \Psi_M + \rho$, where \(\rho \in L^2(\mathcal{H}, \mu_h )\), EDMD approximates \(\mathcal{K}\) by using data 
\begin{align*}
    H  =   
    [
        \vh_1 , \vh_2 , \dots , \vh_N   
    ]\in \mathbb{R}^{d_h\times N}, \quad
    H' = 
    [
        \vh_1' , \vh_2' , \dots , \vh_N'  
    ] \in \mathbb{R}^{d_h\times N},
\end{align*}
where $\vh_{n}' = F(\vh_n)$. Then, to find $K$, the algorithm EDMD minimizes the following cost function
\begin{align}\label{eq:koopman_loss}
    \mathcal{J} \, = \, \frac{1}{2} \sum_{n=1}^{N} \norm{ \rho (\vh_n)}^2  =  \frac{1}{2} \sum_{n=1}^{N} \norm{
    \va\tran \left ( \Psi_M(\vh_n') - K  \Psi_M(\vh_n) \right) }^2. 
\end{align}
Setting 
\begin{align*}
    \Psi_M(H) &= [ \Psi_M(\vh_1) , \Psi_M(\vh_2) , \dots , \Psi_M(\vh_N)] \in \mathbb{C}^{M\times N} 
\end{align*} 
and equivalently for \(\Psi_M(H')\), a solution to \Cref{eq:koopman_loss} is given by
\begin{align}\label{eq:edmd_approx}
    K  =  \Psi_M(H') \, \Psi_M(H)^+
\end{align}
where $\Psi_M(H)^+$ denotes the pseudo-inverse. Upon obtaining $K$, we find approximations of eigenfunctions  
\begin{align*}
    \varphi_{k} =  \xi_k \Psi_M,  
\end{align*}
where \(\xi_k\) is a left eigenvector of \(K\). Finally, denoting \(\boldsymbol{\varphi} \colon \mathcal{H} \rightarrow \mathbb{C}^{M}\) as the function \(\vh \mapsto [ \varphi_{1}(\vh) \oplus \varphi_{2}(\vh) \oplus \dots \oplus \varphi_{K}(\vh)  ]\), we approximate the Koopman modes by
\begin{align*}
    C = \argmin_{\Tilde C \in \mathbb{C}^{d_y\times M}} \lVert \Phi(H) - \Tilde C\boldsymbol{\varphi}(H) \rVert_{Fr}^2 = \argmin_{\Tilde C \in \mathbb{C}^{d_y\times M}} \lVert H - \Tilde C\boldsymbol{\varphi}(H) \rVert_{Fr}^2,
\end{align*}
where \(\lVert \cdot \rVert_{Fr}\) is the Frobenius norm and \(\Phi(\vh) = \vh\) is the identity function. We may now approximate \Cref{eq:iterative_koopman} as 
\begin{align*}
    \vh_t = \Phi(\vh_t) = C\, \Lambda^{t} \boldsymbol{\varphi}(\vh_0),
\end{align*}
where \(\Lambda\) is a diagonal matrix with the corresponding eigenvalues.

In many applications, one is not necessarily interested in the spectral analysis, but only prediction. One then typically sets \(\mathbb{F} = \mathbb{R}\), solves for \(K\) in an exact way, but solves for \(C\) as
\begin{align*}
    C = \argmin_{\Tilde C \in \mathbb{R}^{d\times M}} \lVert \Phi(H) - \Tilde C \Psi_M(H)\rVert^2_{Fr} = \argmin_{\Tilde C} \lVert H -\Tilde C \Psi_M(H)\rVert^2_{Fr}.
\end{align*}
For prediction one simply applies \(K\) several times,
\begin{align*}
    \vh_t = C K^t \Psi_M(\vh_0).
\end{align*}
Due to computational stability, one usually predicts \(\vh_t\) step by step, that is, maps it down to the state space and maps back to the observable image space, before applying \(K\) again, instead of simply applying \(K^t\). It is also worth noting that one may be more interested in mapping to an output of a function \(f\in\mathcal{F}\) instead, and then one simply swaps \(\Phi\) with \(f\) when approximating \(C\). 

To conclude this introduction to EDMD, we do want to mention that the set of eigenfunctions we find through the EDMD algorithm comes with its own set of issues, such as spectral pollution, and efforts to mitigate certain issues has spawned extensions to EDMD, e.g., measure-preserving EDMD and residual DMD \citep{colbrook-2022a, colbrook-2023}.

\subsection{Controlled dynamical systems and the Koopman operator}\label{sec:controlled_koop_intro}
Extending Koopman theory to controlled systems can be done in several ways, and we opt to follow \citet{korda-2018b} and limit ourselves to linear controlled systems,
\begin{align}\label{eq:linear_controlled}
    \vh_{t} = F(\vh_{t-1}, \vx_t) = A_{h} \vh_{t-1} + A_{x} \vx_t, \quad \vy_t = A_{y} \vh_{t}.
\end{align}
Rewriting the input and the evolution operator slightly allows us to apply the Koopman operator and its theory as described in previous sections. Let
\begin{align*}
    \Tilde \vh =  \vh \oplus \Tilde \vx 
\end{align*}
where \(\vh \in \mathcal{H}\) and \(\Tilde \vx \in \ell(\mathcal{X})\) are concatenated, with \(\ell(\mathcal{X})\) is the space of all countable sequences \(\{\vx_i\}_{i=1}^{\infty}\) such that \(\vx_i \in \mathcal{X}\). Then we can rewrite the evolution operator \(F\colon \mathcal{H} \times \mathcal{X} \rightarrow \mathcal{H}\) to \(\Tilde F \colon \mathcal{H} \times \ell(\mathcal{X}) \rightarrow \mathcal{H}\), where
\begin{align*}
    \Tilde \vh_{t} = \Tilde F(\Tilde \vh_{t-1}) = 
        [F(\vh_{t-1}, \Tilde \vx(0))]  \oplus
        [\mathcal{S} \Tilde \vx]
\end{align*}
where \(\Tilde \vx(i) = \vx_i \in \Tilde \vx\) and \(\mathcal{S}\) is the left-shift operator, i.e., \(\mathcal{S}(\Tilde \vx(i)) = \Tilde \vx(i+1)\). The Koopman operator \(\mathcal{K}\) can be applied to \(\Tilde F\) with observables \(\phi \colon \mathcal{H} \times \ell(\mathcal{X}) \rightarrow \mathbb{C}\), and the rest of the Koopman theory follows. 

When approximating the Koopman operator for controlled systems with EDMD, the dictionary we choose needs alteration due to the domain \(\mathcal{H} \times \ell(\mathcal{X})\) being infinite dimensional. \citet{korda-2018b} propose dictionaries that are both computable and enforce the linearity relationship assumed in \Cref{eq:linear_controlled}. The dictionaries to be considered are of the form 
\begin{align*}
    \psi_i(\vh, \Tilde \vx) = \psi_{i}^{(h)}(\vh) + \psi_{i}^{(x)}(\Tilde \vx),
\end{align*}
where \(\psi_{i}^{(x)} \colon \ell(\mathcal{X}) \rightarrow \mathbb{R}\) is a linear functional and \(\psi_i^{(h)} \in \mathcal{F}\). The new dictionaries can be written as
\begin{align*}
    \Psi_M^{(h)} = \{\psi_1^{(h)}, \dots, \psi^{(h)}_M\}, \quad \Psi^{(x)}_{\Tilde M} = \{\psi_1^{(x)}, \dots, \psi^{(x)}_{\Tilde M}\}.
\end{align*}
Note that the number of observables \(M\) and \(\Tilde M\) can differ, even though in the main paper we typically set \(\Tilde M = M\). If they differ, the matrix \(B\) will map from \(\mathbb{F}^{\Tilde M}\) to \(\mathbb{F}^M\). Once the dictionaries are set, we simply solve the optimization problem 
\begin{align*}
    \argmin_{K \in \mathbb{F}^{M\times M}, B\in \mathbb{F}^{M \times \Tilde M}} \frac{1}{2} \sum_{n=1}^N \lVert \Psi^{(h)}_M(\vh_n') - (K \Psi^{(h)}_M(\vh_n) + B \Psi^{(x)}_{\Tilde M}(\vh_n))\rVert^2 
\end{align*}
with the analytical solution being
\begin{align*}
    [K, B] = \Psi^{(h)}_M(H') (\Psi^{(h)}_M(H) \oplus \Psi^{(x)}_{\Tilde M}(H))^+,
\end{align*}
where \(\Psi^{(h)}_M(H) \oplus \Psi^{(x)}_{\Tilde M}(H) \in \mathbb{R}^{(M+\Tilde M)\times N}\) is the concatenation of the two matrices. Approximating \(C\) is done in the same manner as for uncontrolled systems, as it only needs to learn how to map from \(\Psi^{(h)}_M(\mathcal{H})\) to \(\mathcal{H}\). For further details, see \citet{korda-2018b}. 

\section{Theory}\label{sec:append_theory}
In this section we give the necessary assumptions and proofs for the theoretical results in the main paper. We start by defining the state space \(\mathcal{H}\) and input space \(\mathcal{X}\), following the setup from \citet{bolager-2023}. We set
\begin{align*}
    d_{\mathbb R^{d_z}}(\vz, A) = \inf\{d(\vz,\va) \colon \va \in A\},
\end{align*}
where \(d\) is the canonical Euclidean distance in the space \(\mathbb R^{d_z}\). The medial axis is defined as
\begin{align*}
    \text{Med}(A) = \{ \vh \in \mathbb{R}^{d_z} \colon \exists \vp\neq \vq \in A, \lVert \vp - \vz \rVert = \lVert \vq - \vz \rVert = d_{\mathbb R^{d_z}}(\vz, A)\}
\end{align*}
and the reach is the scalar
\begin{align*}
    \tau_A = \inf_{\va\in A} d_{\mathbb R^{d_z}} (\va, \text{Med}(A)),
\end{align*}
i.e., the point in \(A\) that is closest to the projection of points in \(A^c\).
\begin{definition}\label{def:input_space}
    Let \(\widetilde{\mathcal{H}}\) be a nonempty compact subset of \(\mathbb R^{d_h}\) with reach \(\tau_{\widetilde{\mathcal{H}}} > 0\) and equivalently for \(\widetilde{\mathcal{X}} \in \mathbb R^{d_x}\). The input space \(\mathcal{H}\) is defined as 
    \begin{align*}
        \mathcal{H} = \{\vh \in \mathbb R^{d_h} \colon d_{\mathbb R^{d_h}}(\vh,\widetilde{\mathcal{H}}) \leq \epsilon_\mathcal{H} \},
    \end{align*}
    where \(0 < \epsilon_\mathcal{H} < \min\{\tau_{\widetilde{\mathcal{H}}},1\}\). Equivalently for \(\mathcal{X}\),
    \begin{align*}
        \mathcal{X} = \{\vx \in \mathbb R^{d_x} \colon d_{\mathbb R^{d_x}}(\vx,\widetilde{\mathcal{X}}) \leq \epsilon_\mathcal{X} \},
    \end{align*}
    where \(0 < \epsilon_\mathcal{X} < \min\{\tau_{\widetilde{\mathcal{X}}},1\}\).
\end{definition}
\begin{remark}
    This restriction to the type of state and input spaces we consider is sufficient to construct all neural networks of interest by choosing pair of points from the space in question, and construct the weight and bias as in \Cref{eq:sampled_weights}. It is also argued in \citet{bolager-2023} that most interesting real-world application will contain some noise and make sure that the state and input spaces is approximately on the form given in the definition.
\end{remark}
As we are not considering the eigenfunctions of the system, only prediction, and we are working with real valued neural networks, we are in the setting with \(\mathbb{F} = \mathbb{R}\) in \Cref{eq:l2_space}.

\subsection{Uncontrolled systems}\label{sec:proof_uncontrolled_theorem}
For uncontrolled systems the evolution can be described by \(\vh_t = F(\vh_{t-1})\). For the theory developed in this section, we require the following assumptions.
\begin{assumption}\label{assum:mu_h_reg}
    We assume the measure \(\mu_h\)  on the space \(\mathcal{F} = L^2(\mathcal{H}, \mu_h)\) is regular and finite for compact subsets.
\end{assumption}
\begin{assumption}\label{assum:koop}
    The following assumptions is made for the dictionary and the Koopman operator \(\mathcal{K}\) associated to the dynamical system defined by the map \(F\):
    \begin{enumerate}
        \item Any dictionary \(\Psi_{M}\) satisfies \(\mu_h \{\vh \in \mathcal{H} \mid \vc\tran \Psi_M(\vh) = 0\} = 0\), for all nonzero \(\vc \in \mathbb R^{M}\).\label{assum:koop_1}
        \item \(\mathcal{K} \colon \mathcal{F} \rightarrow \mathcal{F}\) is a bounded operator.\label{assum:koop_2}
    \end{enumerate}
\end{assumption}
\Cref{assum:mu_h_reg} is not very limited, as it holds for most measures we are interested in, such as measures absolutely continuous to the Lebesgue measure. For \Cref{assum:koop_1} we have the following result.
\begin{lemma}\label{lemma:assump_satisfied}
    Let \((\mathbb{R}^{d_h}, \mathscr{B}(\mathbb{R}^{d_h}), \mu_h)\) be a measurable space with \(\text{supp}(\mu_h) = \mathcal{H}\), and \(\lambda\) be the Lebesgue measure for \(\mathbb{R}^{d_h}\). If for all non-zero \(\vc \in \mathbb{R}^{d_h}\), the following holds:
    \begin{itemize}
        \item The set of functions \(\{\psi_1, \dots, \psi_M\}\) is linearly independent. 
        \item \(\vc\tran \Psi_M = \sum_{m=1}^M c_m \psi_m\) is analytic on \(\mathbb{R}^{d_h}\),
        \item  \(\mu_h \ll \lambda\),
    \end{itemize}
    then \Cref{assum:koop_1} is satisfied. In particular, it is satisfied when \(\{\psi_i\}_{m=1}^M\) are independent tanh functions. 
\end{lemma}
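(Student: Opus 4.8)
The plan is to fix an arbitrary nonzero $\vc \in \mathbb{R}^{M}$ and work with the single scalar function $g = \vc\tran \mathcal{F}_M = \sum_{m=1}^M c_m \psi_m$. The linear independence of $\{\psi_1,\dots,\psi_M\}$ immediately forces $g$ to be \emph{not} the zero function, since $g \equiv 0$ would be a nontrivial linear relation among the $\psi_m$. The target condition $\mu_h\{\vh \in \mathcal{H} : g(\vh) = 0\} = 0$ then reduces to two facts that I would establish in order: first, that the full zero set $Z = \{\vh \in \mathbb{R}^{d_h} : g(\vh) = 0\}$ has Lebesgue measure zero; second, that $\mu_h(Z) = 0$ then follows from the absolute continuity hypothesis $\mu_h \ll \lambda$. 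Since $\{\vh \in \mathcal{H} : g(\vh)=0\} \subseteq Z$, these two facts close the argument.

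The crux is the first fact: the zero set of a real-analytic function on $\mathbb{R}^{d_h}$ that is not identically zero is Lebesgue-null. I would prove this by induction on $d_h$ together with Fubini. In the base case $d_h = 1$, a nonzero real-analytic function on $\mathbb{R}$ has only isolated zeros (a zero of infinite order would force $g \equiv 0$ by the identity theorem), hence $Z$ is countable and null. For the inductive step, split $\vh = (\vh', t)$ with $\vh' \in \mathbb{R}^{d_h-1}$ and consider the one-variable slices $t \mapsto g(\vh', t)$. Each slice is analytic, so it is either identically zero or has a null zero set. A slice vanishes identically exactly when every Taylor coefficient $\vh' \mapsto \partial_t^{k} g(\vh', 0)$ vanishes at that $\vh'$; because $g \not\equiv 0$, at least one of these coefficient functions is a nonzero analytic function on $\mathbb{R}^{d_h-1}$, and the inductive hypothesis shows the set of $\vh'$ with an identically-zero slice is itself null. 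Integrating the null slice-wise zero sets over the remaining $\vh'$ via Fubini gives $\lambda(Z) = 0$. This measure-zero statement for real-analytic zero sets, and the Fubini/coefficient bookkeeping it requires, is the main technical obstacle.

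With $\lambda(Z) = 0$ established, the hypothesis $\mu_h \ll \lambda$ yields $\mu_h(Z) = 0$ directly, and restricting to $\mathcal{H}$ gives the condition $\mu_h\{\vh \in \mathcal{H} : \vc\tran \mathcal{F}_M(\vh) = 0\} = 0$ for the arbitrary nonzero $\vc$ we fixed. Finally, for the \emph{in particular} claim I would verify the three hypotheses for the tanh dictionary: $\tanh$ is real-analytic on all of $\mathbb{R}$, so each $\psi_m(\vh) = \tanh(\vw_m\tran \vh + b_m)$ is real-analytic on $\mathbb{R}^{d_h}$ as a composition of an affine map with an analytic function, and any finite linear combination inherits analyticity; the stipulation that the tanh functions be ``independent'' supplies precisely the linear-independence hypothesis; and $\mu_h \ll \lambda$ is assumed. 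Hence the general argument applies verbatim.
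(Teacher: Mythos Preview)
Your proposal is correct and follows essentially the same route as the paper: fix a nonzero $\vc$, use linear independence to ensure $g=\vc\tran\mathcal{F}_M$ is not identically zero, argue that the zero set of a nonzero real-analytic function on $\mathbb{R}^{d_h}$ is Lebesgue-null, and then transfer this to $\mu_h$ via $\mu_h\ll\lambda$. The only difference is that the paper obtains the Lebesgue-null step by citing a result of Mityagin, whereas you supply the standard Fubini-plus-induction proof of that same fact; both are valid and the conclusions are identical.
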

\begin{proof}
    Let \(\vc \in \mathbb{R}^{d_h}\) be any non-zero vector. As \(\{\psi_1, \dots, \psi_M\}\) are linear independent, we have \(\vc\tran \Psi_M \not\equiv 0\). As \(\vc\tran\Psi_M\) is analytic, the set 
    \begin{align*}
        A = \{\vh \in \mathbb{R}^{d_h} \mid \vc\tran\Psi_M(\vh) = 0\}
    \end{align*}
    has measure zero, so its Lebesgue measure \(\lambda(A) = 0\) due to Proposition 1 in \citet{mityagin-2020}. We also have \(\lambda(A \cap \mathcal{H}) \leq \lambda(A) = 0\). Finally due to absolute continuity of the measure \(\mu_h\) w.r.t. \(\lambda\), we have
    \begin{align*}
        \mu_h(\{\vh \in \mathcal{H} \colon \vc\tran\Psi_M(\vh) = 0\}) = \lambda(A\cap\mathcal{H}) = 0,
    \end{align*}
    and hence \Cref{assum:koop_1} holds. As the linear projection and shift of bias is analytic on \(\mathbb{R}^{d_h}\), tanh is analytic on \(\mathbb{R}\), and analytic functions are closed under compositions, means \Cref{assum:koop_1} holds when \(\Psi_M\) is a set of linearly independent neurons with tanh activation function.
\end{proof}
\begin{remark}
    The requirement of the functions being analytic for the whole \(\mathbb{R}^{d_h}\) can certainly be relaxed if necessary to an open and connected set \(U\), s.t. \(\mathcal{H}\subseteq U)\). Further relaxation can be made with some additional work. The result above also agrees with the claim made in \citet{korda2018convergence} about \Cref{assum:koop_1} holds for many measures and most basis functions such as polynomials and radial basis functions. Finally, we note that the independence requirement is easily true when we sample the neurons.
\end{remark}
\Cref{assum:koop_2} is commonly enforced in Koopman theory when considering convergence of EDMD and for example holds when \(F\) is Lipschitz, has Lipschitz invertible, and \(\mu_h\) is the Lebesgue measure \citep{korda2018convergence}. 

We note 
\begin{align*}
    \mathcal{NN}_{[1, 1:\infty]} = \bigcup_{M=1}^{\infty} \mathcal{NN}_1(\mathcal{H}, \mathbb R^M)
\end{align*}
is the space of all hidden layers with tanh activation function from \(\mathcal{H}\). We continue with a result relating this space with \(\mathcal{F}\).
\begin{lemma}\label{lemma:countable_basis}
    For a tanh activation function and when \Cref{assum:mu_h_reg} holds, then \(\mathcal{NN}_{[1, 1:\infty]}\) is dense in \(\mathcal{F}\) and has a countable basis \(\{\psi_i \in \mathcal{NN}_{[1, 1:\infty]}\}_{i=1}^{\infty}\), both when the parameter space is the full Euclidean space and when constructed as in \Cref{eq:sampled_weights}.
\end{lemma}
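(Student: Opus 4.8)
The plan is to prove the two claims---density and existence of a countable basis---by combining the classical universal approximation theorem with the separability of $L^2(\mathcal{H},\mu_h)$, and then to transfer both properties to the sampled parameterization of \eqref{eq:sampled_weights} by showing that this parameterization realizes every weight--bias pair. Throughout I identify the relevant object with the linear span $S$ of the scalar neurons $\vh \mapsto \tanh(\vw\tran \vh + b)$, since any finite combination $\vc\tran \mathcal{F}_M$ is exactly such an element of $S$.

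First, density. By \Cref{def:input_space} the state space $\mathcal{H}$ is compact, and by \Cref{assum:mu_h_reg} the measure $\mu_h$ is finite and regular on $\mathcal{H}$; hence $C(\mathcal{H})$ is dense in $L^2(\mathcal{H},\mu_h)$, and for every $g\in C(\mathcal{H})$ one has the bound $\|g\|_{L^2}\le \mu_h(\mathcal{H})^{1/2}\,\|g\|_\infty$. Since $\tanh$ is continuous and not a polynomial, the universal approximation theorem for single-hidden-layer networks (Leshno--Lin--Pinkus--Schocken; Cybenko for sigmoidal activations) gives that $S$ is dense in $C(\mathcal{H})$ in the sup-norm. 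Composing the two density statements through the above norm inequality shows $S$ is dense in $L^2(\mathcal{H},\mu_h)$, which is the first assertion.

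Second, the countable basis. $L^2(\mathcal{H},\mu_h)$ is a separable Hilbert space because $\mathcal{H}$ is a compact metric space and $\mu_h$ is finite. To obtain a countable basis inside $S$, I would use that the map $(\vw,b)\mapsto \tanh(\vw\tran\cdot+b)$ is continuous into $L^2$ (the integrand is bounded by $1$ and $\mu_h$ is finite, so dominated convergence applies), so restricting the parameters to the countable dense set $\mathbb{Q}^{d_h}\times\mathbb{Q}$ yields a countable family whose span is still dense. Extracting a maximal linearly independent subfamily and applying Gram--Schmidt produces a countable orthonormal system $\{\psi_i\}_{i=1}^\infty\subset S$ that is complete, i.e.\ a countable basis of $L^2$ whose elements are finite linear combinations of neurons.

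Finally, the sampled case and the main obstacle. I must show that the restriction to weights and biases of the form \eqref{eq:sampled_weights} does not shrink $S$. Because $\mathcal{H}$ is an $\epsilon_{\mathcal{H}}$-thickening of $\widetilde{\mathcal{H}}$ it has nonempty interior and therefore contains a ball; consequently the normalized differences $(\vh^{(2)}-\vh^{(1)})/\|\vh^{(2)}-\vh^{(1)}\|$ sweep out every direction of the unit sphere $S^{d_h-1}$. Letting $s_1$ range over $\mathbb{R}$ then realizes an arbitrary weight $\vw$, and with $\vw$ fixed, letting $s_2$ range over $\mathbb{R}$ in $b=-\langle\vw,\vh^{(1)}\rangle+s_2$ realizes an arbitrary bias $b$; thus the sampled parameterization attains all of $\mathbb{R}^{d_h}\times\mathbb{R}$, as already noted for this construction in \citet{bolager-2023}. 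Hence the span $S$ is identical in both cases and the two conclusions transfer verbatim. The main work is precisely this realizability step together with the continuity-in-parameters argument used to pass from an uncountable spanning family to a countable one; the universal approximation input and the separability of $L^2$ are standard.
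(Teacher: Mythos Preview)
Your argument follows the same skeleton as the paper's proof: invoke universal approximation in $C(\mathcal{H})$ (Cybenko/Pinkus, and Bolager 2023 for the sampled construction), use compactness of $\mathcal{H}$ together with \Cref{assum:mu_h_reg} to pass to $L^2$-density, and then appeal to separability of $L^2(\mathcal{H},\mu_h)$ for the countable basis. The paper's proof is in fact little more than these citations; your version supplies considerably more detail, in particular the explicit rational-parameter-plus-Gram--Schmidt route to the countable basis, which is a welcome elaboration.

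One caveat on the sampled case: you argue that letting $s_1,s_2$ range over all of $\mathbb{R}$ realizes every pair $(\vw,b)$, so that the two spans are literally identical. But the paper describes $s_1,s_2$ as \emph{constants} (and in \Cref{alg:sampling_params} they are effectively $1$ and $0$); with $s_1$ fixed, the achievable weight magnitude $|s_1|/\|\vh^{(2)}-\vh^{(1)}\|$ is bounded below by $|s_1|/\mathrm{diam}(\mathcal{H})$, so not every $\vw$ is attained and the spans need not coincide pointwise. The density conclusion still holds, but not via your surjectivity shortcut---the correct move is precisely what the paper does (and what you also fall back on at the end): cite the density result for the restricted construction directly from \citet{bolager-2023}.
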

\begin{proof}
    It is well known that such a space is dense in \(C(\mathcal{H}, \mathbb R^{d_y})\) for any \(d_y \in \mathbb N_{>0}\). This holds both when the weight space is the full Euclidean space \citep{cybenko-1989, pinkus-1999} and when limited to the weight construction in \Cref{eq:sampled_weights} \citep{bolager-2023}. As \(\mathcal{H}\) is compact, we have that \(\mathcal{NN}_{[1, 1:\infty]}\) is dense in \(\mathcal{F}\). Furthermore, as \(\mathcal{F}\) is a separable Hilbert space and metric space, there exists a countable subset \(\{\psi_i \in \mathcal{NN}_{[1, 1:\infty]}\}_{i=1}^{\infty}\) that is a basis for \(\mathcal{F}\). 
\end{proof}
The following lemma makes sure we can circumvent assumptions made in \citet{korda2018convergence}, which requires on the dictionary in the EDMD algorithm to be an  orthonormal basis (o.n.b.) of \(\mathcal{F}\). 
\begin{lemma}\label{lemma:onb}
    Let \(H, H'\) be the dataset used in \Cref{eq:edmd_approx}. For every set of \(M \in \mathbb N\) linearly independent functions \(\Psi_M = \{\phi_i\}_{i=1}^{M}\) from a dense subset of \(\mathcal{F}\) and any function \(f = c\tran \Psi_M\), there exists a \(\tilde c\) such that
    \begin{align*}
        \Tilde c\tran \Tilde K \Tilde{\Psi}_M = c\tran K \Psi_M
    \end{align*}
    and 
    \begin{align*}
        \Tilde c\tran \Tilde{\Psi}_M = f = c\tran \Psi_M,
    \end{align*}
    where \(\Tilde{\Psi}_M = [\Tilde{\psi}_{1}, \Tilde{\psi}_2, \dots, \Tilde{\psi}_M]\) are functions from an orthornormal basis \(\{\Tilde{\psi}_i\}_{i=1}^{\infty}\) of \(\mathcal{F}\), and \(K, \Tilde K\) are the Koopman approximations for the dictionaries \(\Psi_M\) and \(\Tilde{\Psi}_M\) respectively.
\end{lemma}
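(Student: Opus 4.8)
The plan is to realize $\Tilde{\mathcal{F}}_M$ and $\mathcal{F}_M$ as two bases of the \emph{same} $M$-dimensional subspace of $\mathcal{F}$, so that the whole statement collapses to a single invertible change of basis. First I would apply Gram--Schmidt to the linearly independent functions $\phi_1,\dots,\phi_M$ to obtain an orthonormal family $\Tilde\psi_1,\dots,\Tilde\psi_M$ with $\mathrm{span}\{\Tilde\psi_1,\dots,\Tilde\psi_M\} = \mathrm{span}\{\phi_1,\dots,\phi_M\}$. Since $\mathcal{F}$ is a separable Hilbert space (\Cref{lemma:countable_basis}), this finite orthonormal set extends to a complete orthonormal basis $\{\Tilde\psi_i\}_{i=1}^{\infty}$ of $\mathcal{F}$, and we set $\Tilde{\mathcal{F}}_M = [\Tilde\psi_1,\dots,\Tilde\psi_M]\tran$. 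Because $\mathcal{F}_M$ and $\Tilde{\mathcal{F}}_M$ are two bases of one and the same subspace, there is a unique invertible matrix $V \in \mathbb{R}^{M\times M}$ with $\Tilde{\mathcal{F}}_M = V\,\mathcal{F}_M$ as vector-valued functions; this is the matrix the statement produces.

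The second identity is then immediate: setting $\Tilde c\tran = c\tran V^{-1}$ gives $\Tilde c\tran \Tilde{\mathcal{F}}_M = c\tran V^{-1} V \mathcal{F}_M = c\tran \mathcal{F}_M = f$, so $f$ lies in $\mathrm{span}(\Tilde{\mathcal{F}}_M)$ exactly because it lies in $\mathrm{span}(\mathcal{F}_M)$.

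For the first identity I would push the change of basis through the EDMD formula of \Cref{eq:edmd_approx}. Evaluating on the data, $\Tilde{\mathcal{F}}_M(H) = V \mathcal{F}_M(H)$ and $\Tilde{\mathcal{F}}_M(H') = V \mathcal{F}_M(H')$, since $V$ acts columnwise. The key algebraic step is the pseudoinverse identity $(V\,\mathcal{F}_M(H))^{+} = \mathcal{F}_M(H)^{+}\,V^{-1}$, which holds because $V$ is invertible and $\mathcal{F}_M(H)$ has full row rank. Granting this, $\Tilde K = \Tilde{\mathcal{F}}_M(H')\,\Tilde{\mathcal{F}}_M(H)^{+} = V\,\mathcal{F}_M(H')\,\mathcal{F}_M(H)^{+}\,V^{-1} = V K V^{-1}$, a similarity transform. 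Substituting, $\Tilde c\tran \Tilde K \Tilde{\mathcal{F}}_M = (c\tran V^{-1})(V K V^{-1})(V \mathcal{F}_M) = c\tran K \mathcal{F}_M$, as required.

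The hard part will be the pseudoinverse identity, since the reverse-order law $(AB)^{+}=B^{+}A^{+}$ fails in general. I would verify it directly from the four Moore--Penrose conditions: with $A=V$ invertible and $B=\mathcal{F}_M(H)$ of full row rank one has $BB^{+}=I$, which makes the two symmetry conditions hold automatically and reduces the remaining two to $BB^{+}B=B$ and $B^{+}B$ being a projector. The full-row-rank requirement itself is where the standing assumptions do the real work: linear independence of the $\phi_i$ together with \Cref{assum:koop_1} (no nontrivial combination of features vanishes $\mu_h$-a.e.) and \Cref{assum:mu_h_reg} ensure that, for sufficiently many sampled data points, the rows of $\mathcal{F}_M(H)$ are independent. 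If this is only available $\mu_h$-almost surely over the sampling rather than deterministically, the conclusion should be stated with that qualification.
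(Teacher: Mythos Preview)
Your approach is correct and essentially identical to the paper's: both arguments apply Gram--Schmidt to the $M$ linearly independent dictionary functions to obtain an invertible change-of-basis matrix $V$ with $\Tilde{\mathcal{F}}_M = V\mathcal{F}_M$, set $\Tilde c\tran = c\tran V^{-1}$, and then push $V$ through the EDMD formula to conclude $\Tilde K = V K V^{-1}$. The only difference is that you are more explicit about justifying the pseudoinverse identity $(V\mathcal{F}_M(H))^{+}=\mathcal{F}_M(H)^{+}V^{-1}$ via full row rank and the standing assumptions, whereas the paper asserts this step without comment.
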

\begin{proof}
    As \(\mathcal{F}\) is a separable Hilbert space and a metric space, there exists a countable basis \(\{\phi_i\}_{i=1}^M \cup \{\phi_i\}_{i=M+1}^{\infty}\), and by applying the Gram-Schmidt process to the basis, we have an o.n.b. \(\{\Tilde{\psi}_i\}_{i=1}^{\infty}\). Any \(M\) step Gram-Schmidt process applied to a finite set of linearly independent vectors, can be written as a sequence of invertible matrices \(V = \prod_{j=1}^{M+1} V_j\). Each matrix \(V_j\) for \(j<M+1\) transforms the \(j\)th vector and the last matrix simply scales. Constructing such matrix \(V\) applied to \(\Psi_M\) yields \(\Tilde{\Psi}_M\). Setting \(\Tilde c\tran = c\tran V^{-1}\), which means 
    \begin{align*}
        \Tilde c\tran \Tilde{\Psi}_M = c\tran V^{-1} \Tilde{\Psi}_M = c\tran \Psi_M = f.
    \end{align*}
    Furthermore, we have
    \begin{align*}
        \Tilde c\tran \Tilde K \Tilde{\Psi}_M &= c\tran V^{-1} [\Tilde{\Psi}_M(H')\Tilde{\Psi}_M(H)^+] V\Psi_M\\
        &= c\tran V^{-1}[V\Psi_M(H') \Psi_M(H)^+V^{-1}] V\Psi_M\\
        &= c\tran [\Psi_M(H')\Psi_M(H)^+] \Psi_M = c\tran K \Psi_M.
    \end{align*}
\end{proof}
We are now ready to prove \Cref{thm:main_paper_finite_horizon} from the paper, namely the existence of networks for finite horizon predictions. We denote \(\mathcal{F}^{d_y}\) as the space of vector valued functions functions \(f = [f_1, f_2, \dots, f_{d_y}]\), where \(f_i \in \mathcal{F}\) and \(\lVert f\rVert = \sum_{i=1}^{d_y} \lVert f_i \rVert_{L^2}\). We also recall that \(\mathcal{F}_M\) is one hidden layer with \(tanh\) activation function, which is equivalent to saying the dictionary are \(M\) neurons. In addition, we let \(K_N\) be the Koopman approximation for \(\mathcal{F}_M\) where \(N\) data points have been used to solve the least square problem.
\begin{theorem}\label{thm:existence_appendix}
    Let \(f \in \mathcal{F}^{d_y}\), \(H, H'\) be the dataset with \(N\) data points used in \Cref{eq:edmd_approx}, and \Cref{assum:mu_h_reg} and \Cref{assum:koop} hold. For any \(\epsilon > 0\) and \(T\in \mathbb{N}\), there exist an \(M\in\mathbb{N}\) and hidden layers \(\mathcal{F}_M\) with \(M\) neurons and matrices \(C\) such that 
    \begin{align}\label{eq:thm1_int_f}
        \lim_{N\rightarrow \infty} \int_{\mathcal{H}} \lVert C K_N^t \mathcal{F}_M - f \circ F^t \rVert_2^2d\mu_h < \epsilon,
    \end{align}
    for all \(t \in [1,2,\dots,T]\). In particular, there exist hidden layers and matrices \(C\) such that
    \begin{align}\label{eq:thm1_int_id}
        \lim_{N\rightarrow \infty} \int_{\mathcal{H}} \lVert C K_N^t \mathcal{F}_M - F^t \rVert_2^2d\mu_h < \epsilon.
    \end{align}
\end{theorem}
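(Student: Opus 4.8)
The plan is to establish the bound by a two-stage limit — first letting \(N\to\infty\) to replace the empirical EDMD matrix by its population counterpart, and only afterwards choosing the width \(M\) — which is exactly the order in which the quantifiers appear in \Cref{thm:existence_appendix}. Throughout I use the identity \(f\circ F^t=\mathcal{K}^t f\) (applied componentwise on \(\mathcal{F}^K\)), so the target becomes \(\mathcal{K}^t f\), and I note that \(\int_{\mathcal{H}}\lVert\cdot\rVert_2^2\,d\mu_h\) splits as a sum over the \(K\) components, so it suffices to drive each component's residual to zero. Since the hidden layers must be \(\tanh\) neurons whereas the convergence theory of \citet{korda2018convergence} is stated for an orthonormal dictionary, I first fix the countable \(\tanh\) basis \(\{\psi_i\}_{i=1}^\infty\) of \(\mathcal{F}\) from \Cref{lemma:countable_basis}, set \(\mathcal{F}_M=[\psi_1,\dots,\psi_M]\tran\), and let \(\tilde{\mathcal{F}}_M=V\mathcal{F}_M\) be its Gram–Schmidt orthonormalization with invertible \(V\). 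By \Cref{lemma:onb}, for the same data \(H,H'\) one has the pointwise identity \(CK_N^t\mathcal{F}_M=\tilde C\,\tilde K_N^t\,\tilde{\mathcal{F}}_M\) whenever \(C=\tilde C V\) (the power version follows from \(\tilde K_N=VK_NV^{-1}\)), so the integral to be bounded is unchanged when computed with the orthonormal dictionary.

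First I would carry out the inner limit. By \Cref{assum:koop_1} no nonzero combination of the \(\psi_i\) vanishes \(\mu_h\)-almost everywhere, so for \(N\) large the empirical Gram matrix is invertible and \(\tilde K_N=\big(\tfrac1N\tilde{\mathcal{F}}_M(H')\tilde{\mathcal{F}}_M(H)\tran\big)\big(\tfrac1N\tilde{\mathcal{F}}_M(H)\tilde{\mathcal{F}}_M(H)\tran\big)^{-1}\). In the data regime of \citet{korda2018convergence} (empirical measure converging to \(\mu_h\)), the law of large numbers gives \(\tfrac1N\tilde{\mathcal{F}}_M(H)\tilde{\mathcal{F}}_M(H)\tran\to I\) by orthonormality and \(\tfrac1N\tilde{\mathcal{F}}_M(H')\tilde{\mathcal{F}}_M(H)\tran\to[\langle\mathcal{K}\tilde\psi_i,\tilde\psi_j\rangle]_{ij}=:\hat K_M\), hence \(\tilde K_N\to\hat K_M\) and \(\tilde K_N^t\to\hat K_M^t\). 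A short coordinate computation, using \((\hat K_M\tilde{\mathcal{F}}_M)_i=P_M\mathcal{K}\tilde\psi_i\), shows by induction that \((\hat K_M^t\tilde{\mathcal{F}}_M)_i=\mathcal{K}_M^t\tilde\psi_i\) for the compression \(\mathcal{K}_M:=P_M\mathcal{K}P_M\), where \(P_M\) projects orthogonally onto \(\mathrm{span}\{\tilde\psi_1,\dots,\tilde\psi_M\}\). Choosing the rows of \(\tilde C\) to be the coordinate vectors of the \(P_M f_k\), orthonormality turns coefficient convergence into \(L^2\) convergence, giving
\[
\lim_{N\to\infty}\int_{\mathcal{H}}\big\lVert \tilde C\,\tilde K_N^{\,t}\,\tilde{\mathcal{F}}_M-\mathcal{K}^t f\big\rVert_2^2\,d\mu_h=\sum_{k=1}^{K}\big\lVert \mathcal{K}_M^t P_M f_k-\mathcal{K}^t f_k\big\rVert_{L^2}^2 .
\]

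Then I would make the right-hand side small in \(M\). Since \(\{\tilde\psi_i\}\) is a complete orthonormal basis, \(P_M\to I\) strongly; combined with \(\lVert P_M\rVert\le1\) and the boundedness of \(\mathcal{K}\) (\Cref{assum:koop_2}) this yields \(\mathcal{K}_M\to\mathcal{K}\) strongly and the uniform bound \(\lVert\mathcal{K}_M\rVert\le\lVert\mathcal{K}\rVert=:\beta\). Powers are then handled through the telescoping identity \(\mathcal{K}_M^t-\mathcal{K}^t=\sum_{s=0}^{t-1}\mathcal{K}_M^{s}(\mathcal{K}_M-\mathcal{K})\mathcal{K}^{t-1-s}\): applied to each \(f_k\) it gives \(\lVert(\mathcal{K}_M^t-\mathcal{K}^t)f_k\rVert\le\sum_{s=0}^{t-1}\beta^{s}\lVert(\mathcal{K}_M-\mathcal{K})(\mathcal{K}^{t-1-s}f_k)\rVert\to0\), and together with \(\lVert\mathcal{K}_M^t(P_M-I)f_k\rVert\le\beta^t\lVert P_M f_k-f_k\rVert\to0\) shows \(\mathcal{K}_M^t P_M f_k\to\mathcal{K}^t f_k\) in \(L^2\). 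Taking the maximum over the finitely many \(t\in\{1,\dots,T\}\) and the \(K\) components, I fix \(M\) (hence \(\mathcal{F}_M\) and the modes \(C=\tilde C V\)) so large that the sum above is below \(\epsilon\) for every such \(t\); transferring back by \Cref{lemma:onb} gives \Cref{eq:thm1_int_f}, and \Cref{eq:thm1_int_id} is the special case \(f=\mathrm{Id}\in\mathcal{F}^{d_h}\).

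The hard part will be the passage from strong convergence of the compressions \(\mathcal{K}_M\) to convergence of their \(t\)-th powers \(\mathcal{K}_M^t\to\mathcal{K}^t\): strong operator convergence is not preserved under composition in general, so this step is not automatic. It is precisely \Cref{assum:koop_2} that rescues it, since the uniform bound \(\lVert\mathcal{K}_M\rVert\le\lVert\mathcal{K}\rVert\) (and the resulting \(\mathcal{K}^{t-1-s}f_k\in\mathcal{F}\)) is what makes each summand in the telescoping estimate vanish. A secondary point needing care is keeping the order of limits honest: \(M\), \(\mathcal{F}_M\), and \(C\) are committed to before the inner limit \(N\to\infty\), and for that fixed \(M\) the inner limit reproduces exactly the residual that the choice of \(M\) already controls.
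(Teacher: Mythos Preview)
Your argument is correct and shares the same skeleton as the paper's proof: density of the \(\tanh\) dictionary via \Cref{lemma:countable_basis}, passage to an orthonormal basis via \Cref{lemma:onb}, and the boundedness of \(\mathcal{K}\) from \Cref{assum:koop_2} to control iterates uniformly in \(t\le T\). The difference is one of packaging. The paper first picks \(f_m=\vc^\top\mathcal{F}_M\) with \(\lVert f_m-f\rVert_{L^2}^2<\epsilon_2\), then invokes Theorem~5 of \citet{korda2018convergence} as a black box to get \(\lim_{N\to\infty}\lVert \vc^\top K_N^t\mathcal{F}_M-\mathcal{K}^t f_m\rVert<\epsilon/2\), and closes with \(\lVert\mathcal{K}^t(f_m-f)\rVert\le\lVert\mathcal{K}\rVert_{op}^t\lVert f_m-f\rVert\). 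You instead unfold that cited theorem: you identify the population EDMD matrix with the compression \(\mathcal{K}_M=P_M\mathcal{K}P_M\), and prove \(\mathcal{K}_M^tP_Mf_k\to\mathcal{K}^tf_k\) directly via the telescoping sum \(\mathcal{K}_M^t-\mathcal{K}^t=\sum_{s=0}^{t-1}\mathcal{K}_M^{s}(\mathcal{K}_M-\mathcal{K})\mathcal{K}^{t-1-s}\) together with the uniform bound \(\lVert\mathcal{K}_M\rVert\le\lVert\mathcal{K}\rVert\). Your route is more self-contained and makes explicit exactly where \Cref{assum:koop_2} enters (both in \(\mathcal{K}^{t-1-s}f_k\in\mathcal{F}\) and in the uniform bound on \(\lVert\mathcal{K}_M^s\rVert\)); the paper's route is shorter because it outsources this step to the reference. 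The two decompositions---yours through \(P_Mf\), the paper's through an arbitrary approximant \(f_m\) in the span---are equivalent up to the choice of representative in \(\widetilde{\mathcal{F}}_M\).
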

\begin{proof}
    W.l.o.g., we let \(d_y=1\) and uses vector \(\vc\) instead of matrix \(C\) in the proof. Due to \Cref{lemma:countable_basis}, we know there exist hidden layers \(\mathcal{F}_M\) and vectors \(\vc\) such that \(\lVert f_m - f\rVert_{L^2}^2 < \epsilon_2\), where \(\vc\tran \mathcal{F}_M = f_m\) and
    \begin{align*}
        \epsilon_2 < \frac{\epsilon}{2 \cdot \max\{\lVert \mathcal{K}\rVert_{op}^{2T}, \lVert \mathcal{K}\rVert_{op}^2\}},
    \end{align*}
    with \(\lVert\cdot\rVert_{op}\) being the operator norm. This is possible due to \Cref{assum:koop} and \Cref{def:input_space}. We then have for any \(t\in [1,2,\dots, T]\)
    \begin{align*}
        &\lim_{N\rightarrow \infty} \int_{\mathcal{H}} \lVert \vc\tran K_N^t \mathcal{F}_M - \mathcal{K}^t\,f  \rVert_2^2d\mu_h \\
        &\leq \lim_{N\rightarrow \infty}\int_{\mathcal{H}} \lVert \vc\tran K_N^t \mathcal{F}_M - \mathcal{K}^t \, f_m \rVert_2^2d\mu_h + \lVert \mathcal{K}^t \,f_m - \mathcal{K} ^t \, f \rVert^2_{L^2}\\
        &\leq \lim_{N\rightarrow \infty}\int_{\mathcal{H}} \lVert \vc\tran K_N^t \mathcal{F}_M - \mathcal{K}^t \, f_m \rVert_2^2d\mu_h + \lVert f_m - f \rVert_{L^2}^2\max\{\lVert \mathcal{K}\rVert_{op}^{2T}, \lVert \mathcal{K}\rVert_{op}^2\}\\
        &< \frac{\epsilon}{2} + \frac{\epsilon}{2} = \epsilon,
    \end{align*}
    where we use Theorem 5 in \citet{korda2018convergence} to bound \(\lVert \vc\tran K_N^t \mathcal{F}_M - \mathcal{K}^t \, f_m \rVert_2^2d\mu_h\); we might need a larger \(M\), which we simply set and the bound of \(f_m - f\) still holds. From the convergence above, \Cref{eq:thm1_int_f} follows by definition of the Koopman operator. For \Cref{eq:thm1_int_id}, we note that \(f(\vh) = \vh\) is in \(\mathcal{F}^{d_h}\) due to \Cref{def:input_space}, and the result holds.
\end{proof}

\subsection{Controlled systems}\label{sec:theory_for_controlled}
The results above cannot easily be shown for controlled systems. The reason is that the dictionary space one uses is not a basis for the observables in the controlled setting, with the dynamical system extended by the left-shift operator, and the simplification made for EDMD in controlled systems. The results above may be extended, but EDMD will not converge to the Koopman operator, but rather to \(P^{\mu}_{\infty}\mathcal{K}_{\mathcal{F}_{\infty}}\), where \(P^{\mu}_{\infty}\) is the \(L^2(\mu)\) projection onto the closure of the dictionary space \citep{korda-2018b}. However, results exist for continuous controlled systems, with certain convergence results for the generator \citep{nuske-2023}. This is not a result that is as strong as in the uncontrolled setting, but an interesting path to connect RNNs/NeuralODEs to such theory.

\section{Evaluation measures}\label{sec:metrics} 

\subsection{Geometrical measure}
The Kullback-Leibler divergence of two probability densities $p(\vx)$ and $q(\vx)$ is defined as 
\begin{align}\label{eq:D_KL}
    D_{KL} (p(\vx) \lVert q(\vx)) = \int_{\vx \in \mathbb{R}} p(\vx) \log \frac{p(\vx)}{q(\vx)} d\vx .
\end{align}

In order to be able to accurately evaluate also high-dimensional systems, we follow the approach used in \citet{hess_generalized_2023} and place Gaussian Mixture Models (GMM) on the along the true trajectory $\vx$ and predicted $\hat{\vx}$ trajectories, obtaining $\hat{p}(\vx) = \frac{1}{T} \sum_{t=1}^{T} \mathcal{N} (\vx, \vx_t, \Sigma)$ and $\hat{q}(\vx) = \frac{1}{T} \sum_{t=1}^{T} \mathcal{N} (\vx, \hat{\vx}_t, \Sigma)$ for $T$ snapshots. Using the estimated densities, we consider a Monte Carlo approximation of \Cref{eq:D_KL} by drawing $n$ random samples from the GMMs and obtain the density measure 
\begin{align}\label{eq:D_stsp}
    D_{KL} (\hat{p}(\vx) \lVert \hat{q}(\vx)) \approx \frac{1}{n} \sum_{i=1}^{n} \log \frac{\hat{p}(\vx)}{\hat{q}(\vx)} .
\end{align}
We call this metric empirical KL divergence (EKL) in the manuscript.
To make our results comparable with \citet{hess_generalized_2023}, we use $\sigma^2=1.0$ and $n=1000$.


\section{Model details and  comparison}\label{sec:model comparison}
\subsection{KIRNN}

The implementation of our Koopman-informed RNNs was done in Python since the key tools for the algorithm already exist as Python libraries.
The algorithm requires the ability to sample weights and biases, thus we used the Python library \texttt{swimnetworks} by \citet{bolager-2023}. Furthermore, we were able to alleviate the approximation of the Koopman operator, in the uncontrolled as well as controlled setting using some functionalities from the Python library \texttt{datafold} by \citet{lehmberg-2020}.

KIRNNs have only a few hyperparameters: the number of nodes in the hidden layer, the activation function of the hidden layer, and a cutoff for small singular values in the least-squares solver. We refer to the singular value cutoff hyperparameter as the regularization rate.
In the case of a KIRNN with time delay, additional hyperparameters are the number of time delays and the number of PCA components, if used.
KIRNNs do not only have a low fit time but also a short hyperparameter tuning since there are only a few degrees of freedom. Thus, our newly proposed method offers efficiency beyond the short training time.

The Van der Pol experiment from \Cref{sec:simple_ODE} is one of the simplest demonstrations of the KIRNN. Our initial experiments used a smaller training dataset than the one described in \Cref{sec:simple_ODE}, with very short trajectories consisting of only three time steps. Wit KIRNN, we were able to approximate the dynamics from this data very well. However, it was not possible to train the gradient-based shPLRNN with such a dataset; thus, we used longer trajectories to be able to compare our method with it. The final choice of hyperparameters for the hyperparameters is given in \Cref{tab:hyperparameters_sampled_RNN}.

In addition to the experiments already discussed in the main text, we were interested to see if our proposed method has a tendency to overfit on the training data. Thus, we investigated the effect of increasing the number of neurons and evaluated our model on training and validation data. Results from five runs are shown in \Cref{fig:ablation_MSE_errors}. We notice that the errors decrease up to a certain number of neurons and stagnate afterwards, and as the gap between the training and validation error only grows slightly with the number of parameters we believe that overfitting does not occur.

\begin{figure}[H]
  \centering
  \includegraphics[scale=0.5]{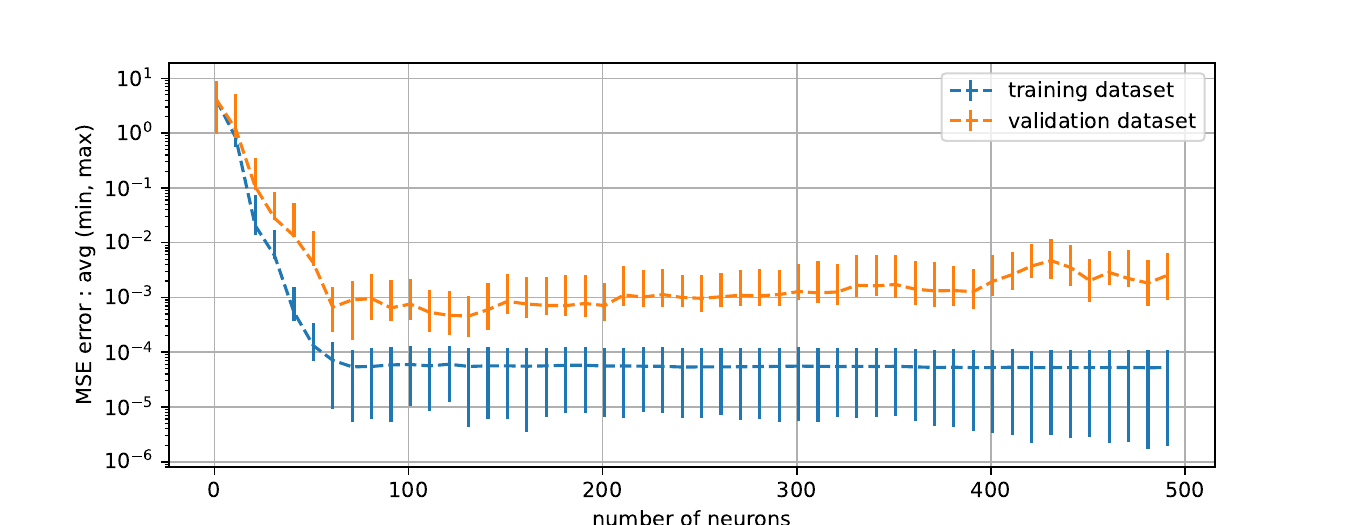}
  \caption{Training and validation MSE error over number of neurons for the Van der Pol experiment from \Cref{sec:simple_ODE}.}
  \label{fig:ablation_MSE_errors}  
\end{figure}

Training the 1D Van der Pol from \Cref{sec:time_delay} was done similarly, the number of hidden nodes and activation function remained unchanged. The hyperparameters are listed in \Cref{tab:hyperparameters_sampled_RNN}. A time delay of six was chosen, however it was also possible to use only two time delays and get an excellent performance, but we opted against this choice because this was based on our knowledge that the true system is two dimensional. Thus, we opted for a higher number of time delays which then get reduced with the additional PCA transformation to mimic a real-world scenario where the true dimension of the problem is unknown. 

The chaotic systems from \Cref{sec:chaotic systems} required more hidden nodes in order to obtain good prediction, as compared to the Van der Pol but nonetheless the process of hyperparameter tuning was simple. The final choice of hyperparameters for the Lorenz and Rössler problems is given in \Cref{tab:hyperparameters_sampled_RNN}.

With the forced Van der Pol example, presented in \Cref{sec:control inputs}, we use the KIRNN as a surrogate linear model with an LQR controller. In order to be able to accurately control the state, the surrogate needs to capture the dynamics of the system with sufficient accuracy. The hyperparameters are listed in \Cref{tab:hyperparameters_sampled_RNN}. 
We opted for an LQR controller as it is efficient and a good choice for linear systems. For our LQR, the costs are set as $R=1$, and $Q=\text{diag}(10)$. The dataset we use contains randomly initialized trajectories evolved in time, as well as randomly chosen control inputs. We observed that a dataset with many shorter trajectories is more useful than a dataset with a few longer trajectories since for this problem diverse initial conditions are more informative of the vector field of the system, as opposed to long trajectories which become periodic. The necessary effort for tuning the KIRNN hyperparameters and LQR controller parameters was low. 
 
The training of a KIRNN on weather data was performed using time-delay embeddings to account for a possibly partial observation of the state. The objective is to train a model which can predict the temperature. To find the hyperparameters a grid search was performed, and this is documented in \Cref{tab:hyperparameters_weather}. The final choice of hyperparameters is given in \Cref{tab:hyperparameters_sampled_RNN}. After training, predictions are done for fixed chunks of time, which are concatenated together afterwards. The number of time-delays dictates how many ground-truth datapoints are necessary to predict the next state, which gets concatenated with the previous predictions. We see this as a reasonable approach for weather prediction, as typically one would use the available information for a fairly long period of time, and predict for a fixed, likely shorter time horizon.

For the electricity consumption data the training setup was the same as the one with weather data, thus we do not elaborate further on this.
In~\Cref{tab:hyperparameters_weather} and \Cref{tab:hyperparameters_electricity} we detail the hyperparameter grid search for \Cref{sec:real-world data} performed for the KIRNN and ESN models.

\subsubsection{Hyperparameters}
See  \Cref{tab:hyperparameters_sampled_RNN} for an overview of the final hyperparameters for all KIRNN models.
\begin{table}[ht]
    \caption{Hyperparameters of KIRNN models}
    \label{tab:hyperparameters_sampled_RNN}
    \centering
    \small
  \begin{tabular}{ p{2cm} p{1.8 cm}  p{1.8cm}  p{1.3cm}  p{1.3cm}  p{1.8cm}  p{1.3cm} p{1.3cm} }
    \toprule
         & Van der Pol \newline \ref{sec:simple_ODE} &  1D Van \newline der Pol  \ref{sec:simple_ODE} & Lorenz \newline \ref{sec:chaotic systems} & Rössler \newline \ref{sec:chaotic systems} & forced \newline Van der Pol \newline \ref{sec:control inputs} & Weather \newline \ref{sec:real-world data} & Electricity consumption \ref{sec:real-world data}\\
    \midrule
    Hidden layer width      &  $80$   & $80$    & $200$   & $300$ & $128$ & $256$ & $64$\\
    Activation function     &  $tanh$ & $tanh$  & $tanh$  & $tanh$& $tanh$ & $tanh$  & $tanh$\\
    Regularization rate     &  1e-8   & 0       & 1e-7    & 1e-4 & 1e-10 & 1e-6 & 1e-8 \\
    Time delays             &  -      & $6$     & -       & - & - & 168  & 240\\ 
    PCA components          &  -      & $2$     & -       & - & - & - & -\\ 
    \bottomrule
  \end{tabular}
\end{table}

\begin{table}[ht]
    \caption{Hyperparameters used in the grid search for training KIRNN and ESN for the weather prediction in \Cref{sec:real-world data}.}
    \label{tab:hyperparameters_weather}
    \centering
    \small
  \begin{tabular}{  p{5cm}  p{5cm}  p{5cm}  }
    \toprule
         & KIRNN & ESN  \\
    \midrule
    Width/units &  $32, 64, 128$ & $16, 32, 64, 128, 256, 512$  \\ 
    Regularization rate &  1e-10, 1e-8, 1e-6, 1e-4, 1e-2 & --- \\ 
    Leak rate & --- & $0.01, 0.1, 0.3, 0.5, 0.7, 1.0$ \\
    Spectral radius & --- &  $0.1, 0.5, 1.0$ \\
    \bottomrule
  \end{tabular}
\end{table}

\begin{table}[ht]
    \caption{Hyperparameters used in the grid search for training KIRNN and ESN for the electricity consumption prediction in \Cref{sec:real-world data}.}
    \label{tab:hyperparameters_electricity}
    \centering
    \small
  \begin{tabular}{  p{5cm}  p{5cm}  p{5cm}  }
    \toprule
         & KIRNN & ESN  \\
    \midrule
    Width/units &  $32, 64, 128, 256, 512$ & $32, 64, 128, 256, 512$  \\ 
    Regularization rate &  1e-8, 1e-6, 1e-4, 1e-2 & --- \\ 
    Leak rate & --- & $0.01, 0.1, 0.3, 0.5, 0.7, 1.0$ \\
    Spectral radius & --- &  $0.1$ \\
    \bottomrule
  \end{tabular}
\end{table}

\subsubsection{Hardware}
The machine used for training the KIRNNs was 13th Gen Intel(R) Core(TM) i5-1335U @ 4.6 GHz (16GB RAM, 12 cores), no GPU hardware was used.

For the experiments with the real-world data in \Cref{sec:real-world data}, we used a machine with AMD EPYC 7402 @ 2.80GHz (256GB RAM, 24 cores).

\subsection{ESN}
For the implementation of ESNs we used the Python library \texttt{reservoirpy} by \citet{trouvain-2020}. All models were trained using a single reservoir and a ridge regression readout. For the synthetic datasets, we consider only reservoir models without any warm-up phase in order to keep them comparable to our KIRNN which also does not have a warm-up.

For the chaotic systems, we were able to find guidelines in the literature for a suitable choice of hyperparameters specifically tailored to the Lorenz system (see \citep{viehweg-2023}). With minor modifications and following the proposed guidelines, we found hyperparameters also for the Rössler system. The choice of hyperparameters is specified in \Cref{tab:hyperparameters_RC}, any hyperparameters not mentioned are set to the default value of \texttt{reservoirpy}.

For the Van der Pol problem many hyperparameter combinations were tried out until we were able to find a reservoir computing model with good performance and sufficient robustness to a change in the random seed. The hyperparameter combinations we considered are shown in \Cref{tab:hyperparameters_gridsearch_RC}. Due to the high expenses of a grid-search approach, a random search was employed and 1000 hyperparameter combinations were considered. The hyperparameter choice with the best performance on the validation dataset was selected and then evaluated on the test dataset. The final choice of hyperparameters is documented in \Cref{tab:hyperparameters_RC}, and any unmentioned hyperparameters are assumed to be set to the default value from the \texttt{reservoirpy} library.

\subsubsection{Hyperparameters}
\begin{table}[ht]
    \caption{Hyperparameters of reservoir models}
    \label{tab:hyperparameters_RC}
    \centering
    \small
  \begin{tabular}{ p{2.5cm} p{2.2cm}  p{1.8cm}  p{1.6cm}  p{1.8cm}  p{2.4cm} }
    \toprule
         & Van der Pol \ref{sec:simple_ODE} & Lorenz \ref{sec:chaotic systems}  & Rössler \ref{sec:chaotic systems}  & Weather \ref{sec:real-world data}  & Electricity \newline consumption \ref{sec:real-world data} \\
    \midrule
    Width/units     &  $500$ & $300$  & $500$ & $32$ & $32$ \\ 
    Leak rate       &  $0.9$ &  $0.3$ & $0.3$  & $0.01$ & $0.3$\\ 
    Spectral radius &  $0.5$ & $1.25$ & $0.5$ & $0.1$ & $0.1$ \\
    Input scaling   &  $0.05$ & $0.1$ & $0.1$& $1$ & $1$  \\
    Connectivity    &  $0.8$ & $0.1$ & $0.1$  & $0.5$ & $0.5$  \\
    Inter connectivity          &  $0.2$ & $0.2$ & $0.2$  \\
    Ridge regularization coeff. &  1e-10 & 1e-4 & 1e-8 & 1e-6 & 1e-6  \\
    Warmup steps                &  $0$ & $0$ & $0$  & $168$ & $240$  \\
    \bottomrule
  \end{tabular}
\end{table}
\begin{table}[ht]
    \caption{Hyperparameters used in random search on a grid for a Van der Pol reservoir model}
    \label{tab:hyperparameters_gridsearch_RC}
    \centering
    \small
  \begin{tabular}{ll}
    \toprule
         & Van der Pol \ref{sec:simple_ODE}  \\
    \midrule
    Width/units     &  $100,200,500$ \\ 
    Leak rate       &  $0.1, 0.3, 0.5, 0.7, 0.9$ \\ 
    Spectral radius &  $0.25, 0.5, 0.75, 1, 1.25, 1.5, 2, 3, 5$ \\
    Input scaling   &  $0.05, 0.1, 0.5, 1, 1.5, 2$  \\  
    Connectivity    &  $0.2, 0.4 ,0.6, 0.8, 1$ \\
    Inter connectivity          &  $0.2, 0.4, 0.6, 0.8, 1$  \\
    Ridge regularization coeff. &  1e-4, 1e-6, 1e-8, 1e-10 \\
    Warmup steps                &  $0$ \\
    \bottomrule
  \end{tabular}
\end{table}
\subsubsection{Hardware}
The machine used for fitting the ESN models was 13th Gen Intel(R) Core(TM) i5-1335U @ 4.6 GHz (16GB RAM, 12 cores).

\subsection{shPLRNN}
For an explanation of shPLRNN, see \Cref{sec:ReLU_based_RNNS}.

\subsubsection{Hyperparameters}
We used the clipped shPLRNN trained by GTF. For the Van der Pol, Lorenz and the weather datasets we considered a fixed GTF parameter $\alpha$, while for the Rössler system we considered an adaptive $\alpha$ (starting from an upper bound) as proposed by \citep{hess_generalized_2023}. The code repository by \citep{hess_generalized_2023} was used to perform the computations. 
The hyperparameters selected for all datasets are detailed in \Cref{tab:hyperparameters_shPLRNN}. Any hyperparameters not specified are set to their default values in the corresponding repository of \citep{hess_generalized_2023}. For the Rössler dataset, finding optimal hyperparameters was more challenging, and for training, we also utilized regularizations for the latent and observation models. 
\begin{table}[ht]
    \caption{Hyperparameters of shPLRNN trained by GTF}
    \label{tab:hyperparameters_shPLRNN}
    \centering
    \small
  \begin{tabular}{ p{3.7cm} p{1.8cm}  p{1.8cm}  p{2.5cm}  p{1.8cm} p{2.3cm} p{3cm}}
    \toprule
         & Van der Pol \ref{sec:simple_ODE} & Lorenz \ref{sec:chaotic systems}  & Rössler\ref{sec:chaotic systems} & Weather \ref{sec:real-world data} 
         & Electricity consumption \ref{sec:real-world data} 
         \\
    \midrule
    Hidden dimension   &  $35$ & $100$  & $50$  & $300$ & $900$\\ 
    Number of  hidden layers  &  $3$ & $3$  & $3$  & $3$  & $4$\\ 
    Batch Size   &  $32$ &  $30$ & $50$ & $32$ & $32$\\ 
    Sequence length &  $37$ & $100$ & $150$ & $100$ & $100$\\
    Epochs  &  $1300$ & $2000$ & $2000$ & $1000$ & $1500$ \\
    GTF parameter ($\alpha$)      &  $0.98$ & $0.3$ & $0.9$ \newline (Upper bound) &  $0.94$ & $1$ \newline (Upper bound) \\
    Latent model regularization rate
  & - & - & 1e-6 & - & 1e-4  \\
  Observation model regularization rate & - & - & 1e-4 & - & 1e-4  \\
     \bottomrule
  \end{tabular}
\end{table}

\subsubsection{Hardware}
The hardware we used to iteratively train the clipped shPLRNN models includes an 11th Gen Intel(R) Core(TM) i7-11800H CPU @ 2.30GHz and 64.0 GB of RAM (63.7 GB usable).

\end{document}